\documentclass{article}
\usepackage{geometry}
\usepackage{indentfirst} 

\usepackage{amsmath}
\usepackage{amsthm}
\usepackage{amsfonts}
\usepackage{bbm}
\usepackage{amssymb}
\usepackage{dsfont}

\usepackage{newtxtext}
\usepackage{newtxmath}

\usepackage{float}
\usepackage{graphicx}
\usepackage[section]{placeins}
\usepackage{comment}
\usepackage{url}
\usepackage{cleveref}
\usepackage[numbers]{natbib}

\DeclareMathOperator{\GW}{GW}
\DeclareMathOperator{\fGW}{fGW}
\DeclareMathOperator{\Cpl}{Cpl}

\DeclareMathOperator{\supp}{supp}

\DeclareMathOperator{\diam}{diam}

\newcommand{\lb}{\llbracket}
\newcommand{\rb}{\rrbracket}
\newcommand{\V}{\mathbb{V}}
\newcommand{\M}{\mathbb{M}_n}
\newcommand{\Mleq}{\mathbb{M}_n^{\leq}}
\newcommand{\X}{\mathbb{X}}
\newcommand{\Y}{\mathbb{Y}_f}
\newcommand{\R}{\mathbb{R}}
\newcommand{\N}{\mathbb{N}}
\newcommand{\Prob}{\mathbf{P}}
\newcommand{\Ex}{\mathbf{E}}

\newtheorem{Thm}{Theorem}[section]
\newtheorem{Lem}[Thm]{Lemma}
\theoremstyle{definition}
\newtheorem{Def}[Thm]{Definition}

\title{$k$-Nearest Neighbors in Gromov--Wasserstein Space}
\author{Kaitlyn Hohmeier $^{a}$ \footnote{Supported by NSF Graduate Research Fellowship Program, Grant No. DGE-2040435.} , Nicolas Fraiman $^b$, Caroline Moosm\"uller $^{a}$ \footnote{Supported by NSF DMS-2410140.} \\
\footnotesize \textit{$^a$ University of North Carolina at Chapel Hill, Department of Mathematics}\\ \footnotesize \textit{$^b$ University of North Carolina at Chapel Hill, Department of Statistics and Operations Research}}
\date{}

\begin{document}

\maketitle

\begin{abstract}
    The Gromov--Wasserstein (GW) distance provides a framework for comparing metric measure spaces, regardless of their underlying structure or geometry. For network-based data, it enables direct comparisons of graphs with different numbers of nodes, without requiring an embedding or other abstraction. Furthermore, through a variant of GW known as fused Gromov--Wasserstein (fGW), it is also possible to incorporate node features in addition to graph structure. In this work, we implement $k$-nearest neighbors ($k$-NN) classification using the GW and fGW distances. We prove the universal consistency of the GW-$k$-NN classifier on the space of equivalence classes of metric measure spaces with finite support and uniform probability measure. By viewing graphs as finitely supported metric measure spaces equipped with the pairwise distance metric and a uniform probability measure on the nodes, we obtain universal consistency of GW-$k$-NN for the space of graphs. Likewise for fGW-$k$-NN, we prove universal consistency on the space of weak isomorphism classes of structured objects consisting of metric measure spaces with finite support and uniform probability measure and feature maps into Euclidean space, thus establishing universal consistency on the space of node-attributed graphs. Our numerical experiments show that GW-$k$-NN and fGW-$k$-NN consistently perform well across multiple graph datasets, suggesting that metric classifiers such as $k$-NN work well in the GW framework. 
\end{abstract}

\section{Introduction}
The technique of $k$-nearest neighbors ($k$-NN) is a well-established machine learning classifier: given a test object $X$, one computes distances to all elements in the training data $\{(X_i, Y_i)\}_{i=1}^n$ whose labels are known and assigns $X$ a label $Y$ by majority vote among the $k$ closest neighbors. The $k$-NN classifier can be equipped with many types of distance functions and implemented in both Euclidean and non-Euclidean metric spaces; our metric spaces of interest are the Gromov--Wasserstein space and the fused Gromov--Wasserstein space. M\'emoli first introduced the Gromov--Wasserstein framework of optimal transport in 2011 as an extension and refinement of the Gromov--Hausdorff distance, creating the \textit{Gromov--Wasserstein distance} as a distance between metric measure spaces \cite{gwdist}.

The Gromov--Wasserstein framework of optimal transport and its variants have many applications in machine learning and classification tasks that involve comparison or alignment of structured, complex data. For example, the Gromov--Wasserstein distance has been used in shape matching as shown in work by \citet{discretegw}. Prior work by \citet{entropicgromov} illustrated the use of the entropic-regularized Gromov--Wasserstein distance in image comparison and alignment. The framework has also been used in biological applications, such as by \citet{biologygw} who created a Gromov--Wasserstein-like distance between biological time series data. Gromov--Wasserstein-based distances have also been applied in graph classification; the fused Gromov--Wasserstein distance, for example, was used for this purpose with graph kernel methods in \citet{vay2019fgw}.

\subsection{Notation and Definitions}\label{introinfo}
Throughout, a \emph{metric measure space} (or \emph{mm--space}) is a triple $\mathcal{X}=(X,d_X,\mu_X)$, where $(X,d_X)$ is a metric space and $\mu_X$ is a Borel probability measure on $X$. We write $\mathrm{supp}(\mu_X)$ for its support and $\phi_\#\mu_X$ for the pushforward of $\mu_X$ by a measurable map $\phi$.
Given probability measures $\mu_X$ on $X$ and $\mu_Y$ on $Y$, we denote by $\Cpl(\mu_X,\mu_Y)$ the set of couplings on $X\times Y$ with marginals $\mu_X$ and $\mu_Y$.
For an mm--space $\mathcal{X}$ we denote by $\lb\mathcal{X}\rb$ its equivalence class under measure-preserving isometries, i.e.\ $\lb\mathcal{X}\rb = \{\mathcal{Y} : \GW(\mathcal{X},\mathcal{Y}) = 0\}$.
When specializing to graphs, we view a graph as an mm--space with finite support and the uniform measure $\mu_X = \frac{1}{|X|}\sum_{x\in X}\delta_x$ on the node set.

\begin{Def}[Gromov--Wasserstein, Continuous Case]\label{gwdistgeneral}
    Denote two mm--spaces by $\mathcal{X}=(X, d_X, \mu_X)$ and $\mathcal{Y}=(Y, d_Y, \mu_Y)$, where $d_X$ and $d_Y$ are metrics and $\mu_X$ and $\mu_Y$ are probability measures. Then
    \[
    \GW_{p}(\mathcal{X}, \mathcal{Y}) =
    \min_{\pi \in \Cpl(\mu_X, \mu_Y)} \left( \int_{(X \times Y)^2} |d_X(x,x') - d_Y(y,y')|^p \ d \pi(x,y) d\pi(x', y') \right)^{1/p},
    \]
    where $\Cpl(\mu_X, \mu_Y)$ is the set of probability measures on $X \times Y$ with marginals $\mu_X$ and $\mu_Y$ and $p \in [1, \infty)$.
\end{Def}

The main drawback of the Gromov--Wasserstein distance is computational complexity: the optimization problem is a non-convex problem that in full generality is NP-hard. For example, in the setting of equivalence classes of finite mm--spaces, an approximation solved using gradient descent has $O(n^3 \log n)$ complexity, where $n$ is the cardinality of the underlying mm--spaces, as shown in \citet{GWBary} and \citet{BBS}. Now consider an mm--space $\mathcal{X} = (X, d, \mu)$. M\'emoli's work introducing the Gromov--Wasserstein distance showed that it defines a metric on the space of mm--spaces up to some choice of isomorphism \cite{gwdist}, defining a metric on the set of equivalence classes of mm--spaces, where
\[
\lb\mathcal{X}\rb = \{ \mathcal{Y} : \GW(\mathcal{X}, \mathcal{Y}) = 0 \}.
\] 
We have that $\GW(\mathcal{X}, \mathcal{Y}) = 0$ if and only if there exists a bijective isometry $\phi : X \rightarrow Y$ such that $\phi_\#$ is measure-preserving, that is, $\phi_\# \mu_X = \mu_Y$. Letting $\mathbb{Z}$ denote a collection of equivalence classes of mm--spaces, we can define the $p$-th Gromov--Wasserstein space as $ (\mathbb{Z}, \GW_p)$. In full generality, this space is not complete (see \citet{gwdist} and \citet{BBS}). We will denote our set of equivalence classes of interest in the following way.
\begin{Def}
    $\X$ denotes the set of the equivalence classes of mm--spaces with finite support and uniform probability measure.
\end{Def}
For graphs, the Gromov--Wasserstein distance applies in the following way: view each graph as a finitely-supported mm--space, equipped with the pairwise distance metric and a uniform probability measure on the nodes. We represent the corresponding Gromov-Wasserstein space, with $p=2$, as $(\X, \GW_2)$, where we are interested in the $p=2$ case because this is what is implemented numerically for GW  computations; see \citet{flamary2021pot}.

A Gromov--Wasserstein variant, called \textit{fused Gromov--Wasserstein}, defines a metric on the space of \textit{structured objects}, uniting the feature-based approach of Wasserstein optimal transport with the structure-based approach of Gromov--Wasserstein optimal transport to study objects that contain both feature and geometric information. This framework was first introduced in 2019 by \citet{vay2019fgw}. Formally, a structured object is defined as follows.

\begin{Def}
    A structured object over a feature space $(\Omega, d)$ is a pair $(\mathcal{X}, f)$, where:
    \begin{itemize}
        \item $\mathcal{X} = (X, d_X, \mu_X)$ is the structure space equipped with metric  $d_X : X \times X \rightarrow \R_+$;
        \item $(\Omega, d)$ is the feature space, equipped with the metric $d: \Omega \times \Omega \rightarrow \R_+$; and
        \item Features are assigned to each element of $X$ via \textit{feature maps}
$f : X \to \Omega$.
\end{itemize}
\end{Def}

We can then define the fused Gromov--Wasserstein distance in the following way. Here, we incorporate Definition 8 from \citet{vay2019fgw} with Definition 3.8 from \citet{Z-GW}.
\begin{Def}[Fused Gromov--Wasserstein, Continuous Case]\label{def:fgwdist}
Consider $\mathcal{X} = (X, d_X, \mu_X)$ and $\mathcal{Y} = (Y, d_Y, \mu_Y)$ with feature maps
$f : X \to \Omega$ and $g : Y \to \Omega$. Let $p,q \geq 1$ and $\alpha \in (0,1)$. Then 
    \begin{align*}
    &\fGW_{\alpha, p,q}((\mathcal{X},f), (\mathcal{Y}, g)) \\
    &\qquad = \min_{\pi\in\Cpl(\mu_X,\mu_Y)} \left(\int_{(X \times Y)^2} \left[
        \alpha \, |d_X(x,x') - d_Y(y,y')|^q  + (1-\alpha) \, d(f(x), g(y))^q 
      \right]^{p/q} \ d\pi(x,y) d\pi(x',y') \right)^{1/p}
    \end{align*}
where the minimum is over the set of couplings between $\mu_X$ and $\mu_Y$.
\end{Def}
Here, $\alpha$ denotes a ``trade-off'' parameter between structure and feature information. In practice, this variable must be tuned to identify the optimal $\alpha$ value (in classification experiments, for example, this would entail identifying the $\alpha$ value that yields the highest classification accuracy). For graphs, the fused Gromov--Wasserstein distance applies to graphs with node attributes; see Figure \ref{fig:graphexs} as an illustration.

\begin{figure}[H]
    \centering
    \includegraphics[scale=0.25]{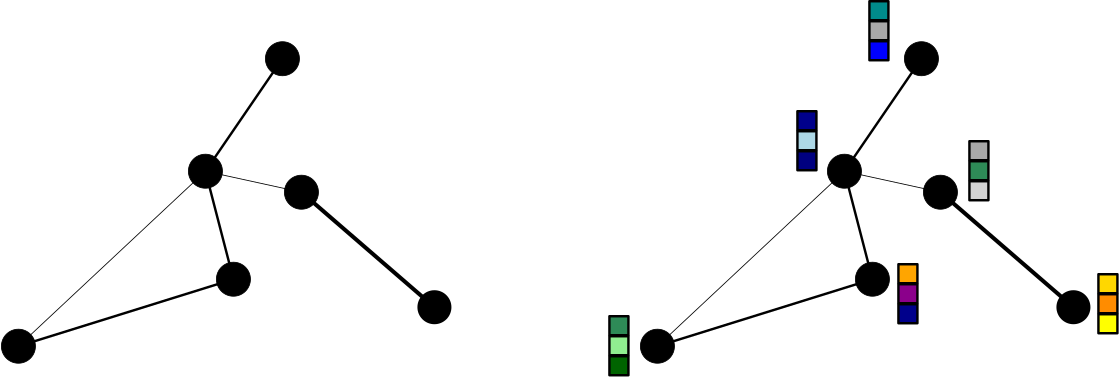}
    \caption{Comparison of a graph versus an attributed graph. Left: By viewing a graph as an mm--space where the metric is pairwise distance between nodes and equipped with a uniform probability measure, we can apply the Gromov--Wasserstein framework to graphs. Line thickness denotes edge weights, which affect pairwise distances. Equal-sized nodes denote uniform probability measure on the nodes. Right: Node-attributed graphs (here, node attributes are vectors in $\R^3$) can be used with the fused Gromov--Wasserstein framework, with both structure and feature information being taken into consideration. In this example, the feature space is $(\R^3, d)$ where $d$ is the standard Euclidean metric.}
    \label{fig:graphexs}
\end{figure}

In an analogous sense to Gromov--Wasserstein optimal transport, the fused Gromov--Wasserstein distance defines a metric on the set of equivalence classes of the space of structured objects. We must therefore define what it means for two structured spaces to be equivalent: \textit{weak isomorphism}. Here, we incorporate the definition of weak isomorphism from Definition 28 in \citet{Z-GW} with the notion of structured objects from \citet{vay2019fgw}. 
\begin{Def}\label{def:strongisom}
    Two structured objects $(\mathcal{X}, f)$ and $(\mathcal{Y}, g)$, where $\mathcal{X} = (X, d_X, \mu_X)$ and $\mathcal{Y} = (Y, d_Y, \mu_Y)$, are \textit{weakly isomorphic} if there exists a structured object $(\mathcal{W}, h)$, where $\mathcal{W} = (W, d_W, \mu_W)$, and measure-preserving maps $\phi_X : W \to X$ and $\phi_Y : W \to Y$ such that $\mu_W \otimes \mu_W$-almost everywhere:
    \[d_X(\phi_X(w), \phi_X(w')) = d_Y(\phi_Y(w), \phi_Y(w')) = d_W(w,w')\]
    and $\mu_W$-almost everywhere:
    \[f(\phi_X(w)) = g(\phi_Y(w)) = h(w).\]
\end{Def}
 A weak isomorphism between structured objects defines an equivalence relation over the space of structured objects. Hence, we can also apply $k$-NN to the metric space of equivalence classes of structured objects equipped with the fused Gromov--Wasserstein distance, applicable to graphs with node attributes. In particular, the use of the $p/q$ power in \Cref{def:fgwdist} ensures that it is well-defined as a metric for all choices of $q$ on the space of equivalence classes of structured objects; see Corollary 4.3 in \citet{Z-GW}. We denote our collection of  isomorphism classes of interest as $\Y$.

\begin{Def}\label{def:Xatt}
     $\Y$ denotes the set of isomorphism classes of structured objects $(\mathcal{X},f)$, where $\mathcal{X}$ is a mm--space with finite support and uniform probability measure and $f$ is a feature map into $\R^d$.
\end{Def}
We represent the corresponding fused Gromov--Wasserstein space, with $p=1$ and $q=2$, as $ (\Y, \fGW_{\alpha, 1,2})$. We are interested in the case $p=1$ and $q=2$ because this is the case implemented numerically for fGW computations; see \citet{vay2019fgw} and \citet{flamary2021pot}.

Following the setup of the binary classification problem shown in \citet{classificationprov}, let $\mathbb P$ be a probability distribution over $W \times\{0,1\}$, where $Y\in\{0,1\}$ denotes the class label and $(W,d_W)$ is a metric space. Hence, ordered pairs of the form $(X, Y)$ are random variables taken from $W$ and $\{0,1\}$. Then a function $g : W \rightarrow \{0,1\}$ is a \textit{classifier}, whose error probability or risk function is given by $L(g) = \Prob(g(X)\neq Y)$. The goal of a classification problem is to minimize this risk. The Bayes classifier, given by $g^*(X)=\mathbf 1_{\{\eta(X)\ge 1/2\}}$, with $\eta(X)=\mathbb P(Y=1\mid X)$ denoting the conditional probability that $Y$ is 1 given some observation $X$, minimizes this risk. This minimum risk is called the \textit{Bayes risk} and is denoted by $L^* = \Prob(g^*(X) \neq Y)$. The Bayes risk $L^*$ represents the theoretical best performance of a classification function \cite{classificationprov}. An important property of any classifier is that of \textit{consistency}. Suppose we have a sequence of training data $D_n = ((X_1, Y_1), \dots, (X_n, Y_n))$. By constructing a sequence of classification functions $\{g_n\}$ (known as a classification rule), such that $L_n = L(g_n) = \Prob(g_n(X, D_n) \neq Y \mid D_n)$ converges in probability to $L^*$, we obtain the idea of \textit{consistency}. A classification rule is \textit{weakly consistent} if $\Ex(L_n) \rightarrow L^*$ as $n \rightarrow \infty$ and \textit{strongly consistent} if $\lim_{n\rightarrow \infty} L_n = L^*$ with probability 1.

However, since in general $\Prob$ is unknown, we want to ensure that consistency is \textit{universal}---that is, regardless of the choice of distribution. Universal consistency is obtained if consistency holds for any choice of distribution. More precisely, consider a finite independent and identically distributed sample of training data $D_n = ((X_1, Y_1), \dots, (X_n, Y_n))$ drawn from $\Prob$. Supervised learning uses a \textit{learning rule} $h_n : (W \times \{0,1\})^n \times W\rightarrow \{0,1\}$, giving the classification rule $g_n = h_n(D_n)$ with risk $L_n = \Prob(g_n(X) \neq Y \mid D_n)$. In this setting, $g_n$ can be determined to be \textit{universally weakly consistent} or \textit{universally strongly consistent}.
\begin{Def}
    Let $L_n$ be as defined in the paragraph above.
    \begin{enumerate}
        \item If $\lim_{n \rightarrow \infty} \Ex(L_n) = L^*$, $g_n$ is universally weakly consistent.
        \item If $\lim_{n\rightarrow \infty} L_n = L^*$ with probability 1 for all distributions $\Prob$, $g_n$ is universally strongly consistent.
    \end{enumerate}
\end{Def}

In this paper, we focus on \textit{graph classification}: that is, given a collection of observed graph data $\{(X_i, Y_i)\}_{i=1}^{n}$, with $X_i$ a graph and $Y_i$ a label, determine the label $Y$ associated with a new graph $X$. 
 We consider both binary and multiclass classification problems in the metric space $(\X, \GW_2)$ and binary classification problems in the metric space $(\Y, \fGW_{\alpha, 1, 2})$ using the $k$-NN classifier. Our main contribution, establishing the universal consistency of $k$-NN in $(\X, \GW_2)$ and in $(\Y, \fGW_{\alpha, 1, 2})$, suggests that metric classifiers such as $k$-NN work well in the Gromov--Wasserstein framework with network data. In particular, this approach offers a highly interpretable framework for network classification: since the Gromov--Wasserstein or fused Gromov--Wasserstein distance between two graphs will be 0 if and only if the graphs are isomorphic, we can therefore understand ``small'' GW-based distances between graphs, that is, graphs that are ``close'' together, as a measure of their structural similarity. We will also showcase numerical results for both methods, illustrating that they work well for classifying both non-attributed and attributed graphs.

\subsection{Main Results}
Our main theoretical results establish the universal consistency of $k$-NN on $(\X, \GW_2)$ and $(\Y, \fGW_{\alpha, 1, 2})$.

\begin{Thm}\label{consistency}
    Let $\X$ denote the set of equivalence classes of mm--spaces $(M,d_M, \mu_M)$ with finite support and uniform probability measure $\mu_M$. Then $\X$ has $\sigma$-finite metric dimension, and under the assumptions that $k \to \infty$ and $k/n \to 0$, the $k_n$-NN classifier is universally weakly consistent on $(\X, \GW_2)$. If in addition $k/\log(n) \to \infty$, then the $k_n$-NN classifier is universally strongly consistent on $(\X, \GW_2)$.
\end{Thm}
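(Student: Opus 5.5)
The plan is to reduce the statement to the known characterization of universal consistency of $k$-NN in separable metric spaces via $\sigma$-finite metric dimension. This is the result of Preiss, Cérou--Guyader, and Collins--Kumari--Kontorovich (and the multiclass version): in a separable metric space of $\sigma$-finite metric dimension, $k_n$-NN is universally weakly consistent when $k\to\infty$, $k/n\to 0$, and universally strongly consistent when additionally $k/\log n\to\infty$, with distance ties broken by independent uniform random tie-breaking. So it suffices to show two things: $(\X,\GW_2)$ is separable, and $\X$ is a countable union of subsets each of finite (Nagata--Assouad) metric dimension. Recall that $A\subseteq\X$ has metric dimension at most $\beta$ if there is $s\in(0,\infty]$ such that every ball $B(x,r)$ with $r<s$ centered in $A$ satisfies the following: whenever the points of $A\cap B(x,r)$ all have distance greater than $r/\text{const}$ from each other, there are at most $\beta$ of them. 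The version we need is that every ball of radius $r\le s$ meets $A$ in a set covered by $\beta$ balls of radius $r/2$ centered in $A$.

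\emph{Decomposition.} Write $\X=\bigcup_{n\ge1}\X_n$, where $\X_n$ consists of classes whose support has exactly $n$ points with uniform weights $1/n$. Each $\X_n$ is parametrized by distance matrices: take the set $\M\subset\R^{n\times n}$ of symmetric matrices with zero diagonal satisfying the triangle inequality and positive off-diagonal entries. Then $\X_n=\M/S_n$ under simultaneous permutation of rows and columns. Separability follows immediately, since rational matrices are dense in each $\M$ and $\GW_2$ is dominated by the entrywise distance. I will not decompose further by diameter, because the dimension bound below is scale invariant.

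\emph{Finite dimension of $\X_n$.} This is the main step. The goal is to show that, on $\X_n$, $\GW_2$ is bi-Lipschitz equivalent (with constants depending only on $n$) to the quotient metric
\[
\rho(D,D')=\min_{\sigma\in S_n}\frac1n\Big(\sum_{i,j}|D_{ij}-D'_{\sigma(i)\sigma(j)}|^2\Big)^{1/2}.
\]
Once that equivalence holds, the claim follows quickly. Balls in $(\M,\|\cdot\|_2)$ are doubling with constant $2^{O(n^2)}$, and a ball in the quotient by the finite group $S_n$ is the image of at most $n!$ balls upstairs. Hence $(\X_n,\rho)$ is doubling, and doubling passes to bi-Lipschitz equivalent metrics with a changed constant. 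Doubling spaces have finite Nagata dimension with $s=\infty$, giving $\sigma$-finite metric dimension for $\X=\bigcup_n\X_n$. The upper bound $\GW_2\le\rho$ is trivial, since one can take the permutation coupling.

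The obstacle is the lower bound $\rho\le C_n\GW_2$, because the optimal coupling between two uniform $n$-point measures need not be a permutation. The cost $\pi\mapsto\sum|D_{ij}-D'_{kl}|^2\pi_{ik}\pi_{jl}$ is a quadratic in $\pi$ on the Birkhoff polytope, so its minimum need not be attained at a vertex. My first approach would be a compactness and contradiction argument at fixed $n$, normalized by scale. Suppose $\GW_2(D_m,D'_m)/\rho(D_m,D'_m)\to0$. Rescale and translate so that $\rho=1$, and extract convergent couplings and matrices. The limit then has $\GW_2=0$, hence a measure-preserving isometry, which gives a permutation. Near such a degenerate configuration, I would expand the quadratic cost to first order around the permutation coupling to show that it grows linearly in the matrix discrepancy. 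If this local analysis fails, for instance because degenerate spaces with repeated distances cause trouble, I would fall back on a weaker Hölder-type equivalence. That would not preserve doubling, so instead I would prove directly that $(\X_n,\GW_2)$ is doubling. The argument: cover a $\GW_2$-ball of radius $r$ around $[D]$ by noting that every $D'$ in it admits a coupling $\pi$ with small cost. Discretize $\pi$ to an $n\times n$ matrix with entries in $\frac{1}{n}\{0,\dots,n\}$ up to $O(1)$ choices, and use a ball of radius $r/2$ for each choice. The number of choices depends only on $n$. Having established this, the theorem follows from the cited consistency result.
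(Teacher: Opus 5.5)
Your architecture is the same as the paper's. You split $\X$ into the $n$-point pieces, compare $\GW_2$ there with the quotient metric $\rho$ on distance matrices modulo $S_n$ (this is exactly the paper's $d_{\M}$), get finite dimension from the finite-dimensional model, add separability, and invoke the $\sigma$-finite-dimension consistency theorem. Your easy bound $\GW_2\le\rho$ via permutation couplings is also the paper's.

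The gap is the step you yourself identify as the main one, $\rho\le C_n\GW_2$, which you do not establish.

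In the compactness route, fixing $\rho=1$ does not bound the matrices: two very large spaces can be at $\rho$-distance $1$, so you cannot extract limits. If you normalize diameters instead, the only problematic case is $\rho_m\to0$. There both sequences converge to the same class, the limit gives no contradiction, and everything rests on the ``first-order expansion'' near the limit. That expansion is exactly the hard part when the limit lies on the boundary of the cone: near clusters of nearly coincident points, non-permutation couplings are cheap. You leave this open.

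The fallback does not work as stated either. Rounding a coupling to a mesh of size $1/n$ changes its cost by an amount controlled only by a multiple of $\diam(D)^2$, not of $r^2$. So when $r\ll\diam(D)$, the number of rounded couplings you need grows with $\diam(D)/r$, and you do not get a scale-free doubling constant.

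The missing idea is a short global argument that avoids compactness entirely. Take $P=n\pi$ doubly stochastic. The objective $Q(P,P)=\frac1{n^2}\sum_{i,j,k,l}|D_{ik}-D'_{jl}|^2P_{ij}P_{kl}$ is a bilinear form with nonnegative coefficients. Write $P=\sum_{\sigma}\lambda_\sigma\Pi_\sigma$ by Birkhoff--von Neumann. Every cross term $Q(\Pi_\sigma,\Pi_\tau)$ is nonnegative, so you may drop the terms with $\sigma\neq\tau$:
\[
Q(P,P)\ \ge\ \sum_\sigma\lambda_\sigma^2\,Q(\Pi_\sigma,\Pi_\sigma)\ \ge\ \rho(D,D')^2\sum_\sigma\lambda_\sigma^2\ \ge\ \frac{\rho(D,D')^2}{m}.
\]
The middle step uses $Q(\Pi_\sigma,\Pi_\sigma)=\frac1{n^2}\sum_{i,k}|D_{ik}-D'_{\sigma(i)\sigma(k)}|^2\ge\rho^2$. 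The last step is Cauchy--Schwarz on the $m$ nonzero weights, where $m\le n!$, or $m\le(n-1)^2+1$ by Carath\'eodory. Minimizing over $P$ gives $\rho\le\sqrt{(n-1)^2+1}\,\GW_2$ uniformly, with no special treatment of degenerate configurations.

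With this inequality in place, the rest of your outline goes through: doubling of the quotient, transfer under bi-Lipschitz maps, rational matrices for separability, and the cited consistency theorem.
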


\begin{Thm}\label{consistencyFGW}
    Let $\Y$ denote the set of weak isomorphism classes of structured objects $(\mathcal{X}, f)$, where each $\mathcal{X} = (X, d_X, \mu_X)$ is a finitely supported mm--space with uniform probability measure and each $f$ is a feature map $f : X \to \R^d$. Let $\alpha \in (0,1)$. Then $\Y$ has $\sigma$-finite metric dimension, and under the assumptions that $k \to \infty$ and $k/n \to 0$, the $k_n$-NN classifier is universally weakly consistent on $(\Y, \fGW_{\alpha, 1, 2})$. If in addition $k/\log(n) \to \infty$, then the $k_n$-NN classifier is universally strongly consistent on $(\Y, \fGW_{\alpha, 1, 2})$.
\end{Thm}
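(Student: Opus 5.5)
The statement to prove is Theorem~\ref{consistencyFGW}. We follow the template of Theorem~\ref{consistency}: invoke the known result (Cérou--Guyader; Collins et al.) that the $k$-NN rule is universally weakly consistent under $k\to\infty$, $k/n\to 0$, and universally strongly consistent if in addition $k/\log n\to\infty$, on any separable metric space of \emph{$\sigma$-finite metric dimension}. Such a space is a countable union of subsets $\mathbb{Y}_j$, each of finite Nagata/Preiss metric dimension. The whole burden is therefore to show that $(\Y,\fGW_{\alpha,1,2})$ is separable and has $\sigma$-finite metric dimension.

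For the decomposition, stratify $\Y=\bigcup_{n\ge1}\Y^{(n)}$, where $\Y^{(n)}$ consists of classes with a representative of exactly $n$ support points. Since $\fGW$ is a genuine metric on weak isomorphism classes (Corollary 4.3 of \citet{Z-GW}), there is a well-defined surjection $\Phi_n:(\R_+^{n\times n}\times(\R^d)^n)\to\Y^{(n)}$ sending a pair $(D,F)$ (distance matrix, feature matrix) to its class. I will show that $\Phi_n$ is Lipschitz for the norm $\|(D,F)\|=\max_{i,j}|D_{ij}|+\max_i|F_i|$ by using the identity (diagonal) coupling. With that coupling,
\[
\fGW_{\alpha,1,2}\le \frac1{n^2}\sum_{i,j}\bigl(\alpha|D_{ij}-D'_{ij}|^2+(1-\alpha)|F_i-F'_i|^2\bigr)^{1/2}\le C\|(D,F)-(D',F')\|.
\]
Separability then follows, since rational matrices give a countable dense set. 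Metric dimension, however, is not preserved under Lipschitz images, so a Lipschitz map alone is not enough. Instead, I write $\Y^{(n)}$ as a further countable union. Each piece has the form $\Phi_n(P)$ with $P$ a small region (for example, a rational cube intersected with the set of generic configurations) on which $\Phi_n$ is injective modulo the permutation group $S_n$. On such a piece the fGW metric should be bi-Lipschitz comparable to the quotient distance $\min_{\sigma\in S_n}\|(D,F)-\sigma\cdot(D',F')\|$. Finite-dimensional normed spaces, and quotients of them by finite groups acting by isometries, restricted to bounded pieces, have finite Nagata dimension. Finite metric dimension passes to bi-Lipschitz images, and subsets of finite-dimensional pieces inherit finite dimension.

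An alternative is to reduce to Theorem~\ref{consistency}. If the GW-side argument already constructs, for each stratum, a bi-Lipschitz chart onto a finite-dimensional quotient of matrix space, then the same chart with the features appended, $(D,F)\in\R^{n\times n}\times\R^{nd}$, works verbatim. The reason is that the fGW integrand
\[
\bigl(\alpha|D_{ij}-D'_{\sigma(i)\sigma(j)}|^2+(1-\alpha)|F_i-F'_{\sigma(i)}|^2\bigr)^{1/2}
\]
is comparable, up to constants depending on $\alpha$, to $|D_{ij}-D'_{\sigma(i)\sigma(j)}|+|F_i-F'_{\sigma(i)}|$. So I would mirror the GW proof step for step, replacing $\R^{n\times n}$ with $\R^{n\times n}\times\R^{nd}$.

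The main obstacle is the lower bound in the bi-Lipschitz comparison. The optimal coupling between two $n$-point uniform measures need not be a permutation, because fGW and GW are quadratic problems, not linear assignment problems. I would handle this locally. Near a generic configuration, whose automorphism group is trivial and whose points are separated, I would show that any coupling placing mass $\varepsilon$ off the best permutation pays a cost at least $c\varepsilon$, with $c$ proportional to the minimal separation of the configuration. Restricting to countably many pieces on which this separation is bounded below (rational thresholds) gives a uniform constant on each piece. Non-generic configurations, meaning those with nontrivial symmetries or coincident rows, lie in lower-dimensional strata, and I would treat them by the same argument applied to the reduced configuration (merging weak-isomorphic points). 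This again produces countably many pieces. Collecting all pieces gives $\sigma$-finite metric dimension, and the cited consistency theorem finishes both the weak and strong statements.
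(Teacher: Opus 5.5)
Your overall framework matches the paper's: stratify $\Y=\bigcup_n\Y(n)$, chart each stratum bi-Lipschitzly into a finite-group quotient of a finite-dimensional normed space, then invoke the $\sigma$-finite-dimension consistency theorem. Your permutation-coupling estimate is exactly the easy half $\fGW_{\alpha,1,2}\le d_{\V_n}$ of \Cref{lem:fgw-coordinate-map}.

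\textbf{The gap.} The gap is the reverse inequality. You rightly call it the crux, but you only promise it (``I would show\dots''). In addition, the local scheme you outline does not cover all of $\Y$. It relies on a unique best permutation, so it excludes configurations with nontrivial automorphism group. Your fallback for those, merging weak-isomorphic points into a reduced configuration, is vacuous. In $\Y(n)$ the support points are distinct and $d_X$ is a genuine metric, so distance rows never coincide ($D_{ii}=0<D_{i'i}$). Moreover, by \Cref{lem:finite-uniform-weak-strong} your $\Phi_n$ is already injective modulo $S_n$ on the entire stratum.

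Take an equilateral triangle with constant features, or any vertex-transitive graph. Such a configuration is not a lower-dimensional reduction of anything. Yet near it all permutations nearly tie, so your bookkeeping with a single best permutation does not apply. Since $\sigma$-finite dimension needs every point of $\Y$ to lie in a piece of finite dimension, the proposal does not establish it. Even at generic points, you would additionally need the on-permutation mass to pay $\gtrsim\|(A,u)-\sigma(B,v)\|_{\alpha,n}$, not just the off-permutation mass to pay $c\varepsilon$.

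Your second route has the same hole. Comparability of the integrand with $|D-D'|+|F-F'|$ does not address the passage from doubly stochastic couplings to permutations.

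\textbf{The missing idea.} No genericity or local analysis is needed, because the objective $Q(P)$ is bilinear in $P$ with nonnegative cross terms. For any doubly stochastic $P$, Birkhoff--von Neumann plus Carath\'eodory (the Birkhoff polytope has dimension $(n-1)^2$) gives $P=\sum_{k=1}^m\lambda_k\Pi_{\sigma_k}$ with $m\le(n-1)^2+1$. Writing $d=d_{\V_n}([A,u],[B,v])$, dropping the off-diagonal terms and applying Cauchy--Schwarz gives
\[
Q(P)=\sum_{k,l}\lambda_k\lambda_l\,Q(\Pi_{\sigma_k},\Pi_{\sigma_l})\ \ge\ \sum_k\lambda_k^2\,\|(A,u)-\sigma_k(B,v)\|_{\alpha,n}\ \ge\ d\sum_k\lambda_k^2\ \ge\ \frac{d}{(n-1)^2+1}.
\]
This yields $d_{\V_n}\le((n-1)^2+1)\,\fGW_{\alpha,1,2}$ on all of $\Y(n)$, symmetric configurations included. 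The cube-cover argument of \Cref{lem:packingmetricdim} then gives finite Nagata dimension of each stratum, and hence the theorem; this is the same fact as your remark on finite-group quotients of finite-dimensional normed spaces.
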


In particular, these results establish the universal consistency of the GW-$k_n$-NN and the fGW-$k_n$-NN classifiers on the spaces of graphs and node-attributed graphs, respectively.

\subsection{Related Work}\label{consistencyhistory} 
A broad generalization of GW-based distances is given by the \textit{Z-Gromov-Wasserstein} distance. Introduced by \citet{Z-GW}, the Z-Gromov--Wasserstein framework unites all frameworks of optimal transport via the choice of a metric space $(Z, d_Z)$, where the choice of metric space dictates which optimal transport setting one retrieves. \cite{Z-GW} For example, choosing $Z = \R$, equipped with the standard metric, retrieves the Gromov--Wasserstein framework, while choosing $Z = \R \times Y$, where $Y$ is any metric space, yields fused Gromov--Wasserstein. 

In the area of graph classification, there are some numerical results involving graph classification with optimal transport methods. For example, prior work in \citet{vay2019fgw} on fused Gromov--Wasserstein optimal transport showcased numerical experiments involving a support vector machine (SVM) with the indefinite kernel matrix $e^{-\gamma FGW}$, where $FGW$ denotes the fused Gromov--Wasserstein distance, compared with other graph kernel methods. This work also included numerical experiments with this SVM method using the Gromov--Wasserstein distance on two non-attributed social graph datasets, the first such application of the Gromov--Wasserstein distance. The closest example we can find in prior work to our classification task of interest with $k$-NN is the use of the Gromov--Wasserstein distance with $k$-NN in shape classification (see \citet{2508.02364}); however, to the best of our knowledge, there do not exist any applications of GW-$k$-NN in graph classification, nor do there exist theoretical results for GW-$k$-NN for graph classification, or for any other mm--space setting. Likewise, we are unaware of any other such applications and theoretical results for fGW-$k$-NN.

In Euclidean settings, the theory of $k$-NN is well-established and well-understood, beginning with Cover and Hart \cite{coverhart} in 1967 showing that the $1$-NN error is bounded by twice the Bayes error, and extended by Stone \cite{stone} in 1977 to prove universal consistency. The $k$-NN classification method continues to be an active area of research. For example, in more recent work, \citet{dgw} in 2018 generalized previous Bayes error rate results for $k$-NN and proved a rate of convergence for the $L_2$ error, while \citet{cfs} in 2020 derived asymptotic results under noisy training data. For universal consistency, \citet{stone} established that $k$-NN is weakly universally consistent  under the conditions (1) $k \rightarrow \infty$ and (2) $k/n \rightarrow 0$, where $k$ grows with the sample size $n$, with universal strong consistency holding if in addition (3) $k / \log n \rightarrow \infty$, as shown by \citet{strongconsistency}.

Within optimal transport theory, a 2023 paper by Ponnoprat \cite{ponnoprat} established both negative and positive $k$-NN consistency results in the Wasserstein space. The major results of this paper established that $k$-NN is not universally consistent on $\mathcal{W}_p((0,1))$ and proved universal consistency in certain special cases. Of particular interest to us is one special case, in which it was shown that $k$-NN is universally consistent on the space of finitely-supported measures equipped with the $1$-Wasserstein distance (see Theorem 3 in \cite{ponnoprat}). In this case, universal consistency was proven using a property of certain metric spaces known as \textit{$\sigma$-finite dimension}. Using $\sigma$-finite dimension as a means to prove universal consistency comes from a combination of work by \citet{Preiss}, \citet{CerouGuyader2006}, \citet{NIPS2014_2b764b80}, and \citet{assouad}, whose results collectively established that $k$-NN is universally consistent on separable metric spaces that have the property of $\sigma$-finite dimension. In a general metric space $(X,d)$, conditions (1), (2), and (3) are not sufficient to guarantee universal consistency of a $k$-NN classifier, as shown by \citet{kumari}. Work by Chaudhuri and Dasgupta (see Theorem 1 in \cite{NIPS2014_2b764b80}) established the additional condition needed for universal strong consistency to hold. In addition to conditions (1), (2), and (3), strong universal consistency holds on a separable metric space $(X, d)$ if the \textit{Lebesgue differentiation condition} is true on that metric space: for any bounded, $\mu$-measurable function $f$, with $\mu$ a Borel probability measure, we have
\[
\lim_{r \downarrow 0} \frac{1}{\mu(B(x,r))} \int_{B(x,r)} f \ d \mu = f(x) \quad \mu-a.e.,
\] 
where $B(x,r)$ denotes a ball in $(X,d)$.  Ultimately, it was observed by  \citet{CerouGuyader2006} and subsequently proven by \citet{assouad} that this differentiability condition is equivalent to $\sigma$-finite dimension, a proof originally outlined by Preiss \cite{Preiss}. 
Hence, to summarize this prior work, $k$-NN will be universally consistent on a separable metric space that exhibits the $\sigma$-finite dimension property. Furthermore, to underscore that $k$ grows with $n$, we will refer to this classifier as $k_n$-NN.

\subsection{Outline}
In Section \ref{sec:dimensions}, we prove a result about a particular notion of dimension, which we will use to prove our main results. Our universal consistency proof strategy involves proving that $(\X, \GW_2)$ exhibits \textit{$\sigma$-finite dimension}, which we discuss in more detail in Section \ref{sec:GWuc}. In Section \ref{sec:fgw-consistency}, we offer a model of $n$-point attributed spaces, and like in the GW case, we ultimately establish universal consistency in $(\Y, \fGW_{\alpha, 1, 2})$ by showing that this space also has $\sigma$-finite dimension. Finally, in \Cref{sec:numericalresults}, we present the results of numerical experiments with GW-$k_n$-NN and fGW-$k_n$-NN applied to several benchmark graph datasets.

\section{Finite Assouad-Nagata Dimension under Bi-Lipschitz Maps}\label{sec:dimensions}

In this section, we present and prove a result that shows that a bi-Lipschitz map will prove the finiteness of Assouad-Nagata dimension. Ultimately, we will use that result, \Cref{lem:packingmetricdim}, as the main tool to prove universal consistency in \Cref{sec:GWuc} and \Cref{sec:fgw-consistency}. 
We first define the concept of 
\textit{Nagata dimension} (also known as \textit{Assouad-Nagata dimension}). First, following the definitions in \citet{Lang}, for a metric space $(X,d)$ and $Y = \{ Y_i\}_{i \in I}$ a family of subsets of $X$, $Y$ is \textit{D-bounded} for some constant $D \geq 0$ if $\mathrm{diam} (Y_i) := \sup \{d(x,x') : x, x' \in Y_i \} \leq D$ for all $i \in I$. Then the Assouad-Nagata dimension is defined in the following way.
\begin{Def}
    For $s > 0$, the \textit{s-multiplicity} of $Y$ is the infimum of all such $\beta$ such that every subset of $X$ with diameter $\leq s$ meets at most $\beta$ members of the family. 
\end{Def}

\begin{Def}\label{ANdim}
    The Assouad-Nagata dimension of $X$, $\dim_{AN}(X)$, is the infimum of all integers $\beta$ with the following property: There exists a constant $c > 0$ such that for all $s > 0$, $X$ has a $cs$-bounded covering with $s$-multiplicity at most $\beta + 1$ (with $\dim_{AN}(X) = \infty$ if no $\beta$ exists). For a subset $Y$ of $X$, we say that $Y$ has Assouad-Nagata dimension $\delta \in \N$ on scale $s$ inside of $X$ if and only if for some $c > 0$ and integer $\beta$, there exists a covering $\mathcal{U}$ of $Y$ by sets of diameter $\leq cs$ such that every subset of $X$ of diameter $\leq s$ meets at most $\beta$ elements of $\mathcal{U}$. We denote this by $\dim^{s}_{AN}(Y, X)$.
\end{Def}

The notion of Assouad-Nagata dimension was first introduced by \citet{JNagata} and then later adapted by \citet{assouaddimension}. The main interest in this dimension is to determine if it is finite or infinite; however, as motivated in Example 1 and subsequent discussion in \citet{ponnoprat}, the idea of Assouad-Nagata dimension fails for infinite-dimensional spaces. But, it is still useful to define the notion of $\sigma$-finite metric dimension, using Definition 6.6 in \citet{nagatadim3}.

\begin{Def}\label{sigmafinite}
    A metric space $(X,d)$ is said to have \textit{$\sigma$-finite metric dimension} if there exists a countable family $\{Y_n\}_{n \in \N}$ of subsets of $X$ such that every subspace $Y_n$ has finite Assouad-Nagata dimension in $X$ for some scale $s_n > 0$ (where the scales $s_n$ are possibly all different) and $X = \bigcup^\infty_{n=1} Y_n$. We denote this by $\dim_{AN}^{s_n}(Y_n, X)$.
\end{Def}

Hence, a space with $\sigma$-finite metric dimension is one with a countable cover, where each subset in the cover has finite Assouad-Nagata dimension. The notion of $\sigma$-finite metric dimension was first introduced by \citet{Preiss} in connection with the Lebesgue differentiation condition.

With these definitions, we show that if there exists a bi-Lipschitz map between two metric spaces, where the target space is a quotient space of Euclidean space, then we can establish finite Assouad-Nagata dimension in a larger ambient, and in particular we can show that that larger ambient space has $\sigma$-finite metric dimension.
\begin{Lem}\label{lem:packingmetricdim}
     Let $(X, d_X)$ be a metric space with $X = \cup_{n \in \N} W_n$. Suppose that for each $n$, $\phi_n : W_n \to \phi_n(W_n)\subset Z_n$ is bi-Lipschitz with constants $0 < c_n \leq C_n < \infty$, where $Z_n$ is a quotient space of $ \R^{d_n}$ by a finite isometric group action.
     Then $\dim_{AN}^{s}(W_n, X)$ is finite for each $n$, and in particular $(X, d_X)$ has $\sigma$-finite metric dimension.
\end{Lem}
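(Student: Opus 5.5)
The plan is to transfer finite Assouad--Nagata dimension from the quotient $Z_n$ back to $W_n$ through the bi-Lipschitz map $\phi_n$, and then assemble the countable cover required by \Cref{sigmafinite}. Throughout, I use the scale-$s$ version of the dimension in \Cref{ANdim}: for a fixed scale $s$ I only need \emph{some} covering of $W_n$ by sets of diameter at most $cs$ with bounded multiplicity relative to subsets of $X$ of diameter at most $s$.

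The first step is to show that $Z_n = \R^{d_n}/G_n$, with $G_n$ a finite group of isometries and the quotient metric $\bar d([u],[v]) = \min_{g\in G_n}\|u-gv\|$, is doubling. The quotient map $q:\R^{d_n}\to Z_n$ is $1$-Lipschitz and maps balls onto balls: $q(B(u,r)) = B([u],r)$. Since $\R^{d_n}$ is doubling with constant $2^{O(d_n)}$, any ball $B([u],2r)$ is the image of $B(u,2r)$. That Euclidean ball is covered by $N=2^{O(d_n)}$ Euclidean balls of radius $r$, whose images are balls of radius $r$ in $Z_n$. Hence $Z_n$ is doubling, and every subset $\phi_n(W_n)$ is doubling with a comparable constant. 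Bi-Lipschitz maps preserve the doubling property with a constant depending only on $C_n/c_n$, so $W_n$ is doubling as well. Assouad's theorem gives finite Assouad--Nagata dimension for doubling spaces. To avoid citing it, I can instead use a direct packing argument.

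The second step, which I expect to be the main obstacle, is making the multiplicity condition hold relative to subsets $A\subseteq X$ rather than subsets of $W_n$. For a scale $s$, take a maximal $s$-separated net $\{w_i\}\subset W_n$ and cover $W_n$ by the balls $U_i = B(w_i,s)\cap W_n$, whose diameters are at most $2s$, so $c=2$. Now let $A\subseteq X$ have $\diam A\le s$. If $A$ meets $U_i$ and $U_j$, then picking points $a_i\in A\cap U_i$ and $a_j \in A\cap U_j$ gives
\[
d_X(w_i,w_j)\le d_X(w_i,a_i)+d_X(a_i,a_j)+d_X(a_j,w_j)\le 3s.
\]
So the net points of the sets met by $A$ form an $s$-separated subset of $W_n$ of diameter at most $3s$. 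Their images under $\phi_n$ are $c_n s$-separated and lie in a ball of radius $3C_n s$ in $Z_n$. Lifting to $\R^{d_n}$: fix one image point and choose a preimage $u_0$. Every other image point has a preimage within $3C_n s$ of $u_0$. Distinct quotient points have preimages that are at least $c_n s$ apart, since the Euclidean distance dominates the quotient distance. A volume packing bound in $\R^{d_n}$ then gives
\[
\beta\le \Big(1+\tfrac{6C_n}{c_n}\Big)^{d_n},
\]
independent of $s$. The key point is that $A$ is never required to lie in $W_n$; we only use that its points lie in $X$, where the triangle inequality is available.

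Finally, this bound holds for every $s>0$, so $\dim^{s}_{AN}(W_n,X)$ is finite for each $n$ at any chosen scale $s_n$. Since $X=\bigcup_n W_n$ is a countable union, \Cref{sigmafinite} gives that $(X,d_X)$ has $\sigma$-finite metric dimension. The only delicate points to check are the following. Ball images in the quotient behave as claimed because $G_n$ is finite and acts by isometries, so the minimum defining $\bar d$ is attained. The lower Lipschitz constant $c_n$ is essential, because it is what turns separation in $W_n$ into separation in $Z_n$.
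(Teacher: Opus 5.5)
Your argument is correct, but it is organized differently from the paper's.

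\textbf{The paper's route.} The paper builds the cover on the target side. It partitions $\mathbb{R}^{d_n}$ into half-open cubes of side $s/\sqrt{d_n}$, pushes them to $Z_n$ through the $1$-Lipschitz quotient map, and pulls them back by $\phi_n$. This gives a partition of $W_n$ with diameter constant $c_n^{-1}$. For multiplicity, it uses that $\phi_n(A\cap W_n)$ has diameter at most $C_n s$. The full preimage of this set in $\mathbb{R}^{d_n}$ then lies in $|G_n|$ translates of a ball of radius $C_n s$, and counting cubes gives $|G_n|(2C_n\sqrt{d_n}+3)^{d_n}$.

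\textbf{Your route.} You build the cover intrinsically in $(W_n,d_X)$ from a maximal $s$-separated net. You handle arbitrary $A\subseteq X$ with the triangle inequality in $X$, and use $\phi_n$ only to transfer a packing count. You lift the separated points one at a time relative to a single base lift, instead of lifting the whole preimage. As a result the factor $|G_n|$ disappears, and your bound $(1+6C_n/c_n)^{d_n}$ depends only on the distortion ratio $C_n/c_n$ and on $d_n$.

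\textbf{What each buys.} Your route is more general: it goes through verbatim whenever the target is merely doubling, since it only uses a uniform bound on the number of separated points in a ball. The paper's construction relies on the cube structure of $\mathbb{R}^{d_n}$. In exchange, it produces an explicit partition and needs no choice of a maximal net. Both arguments give constants that are uniform in $s$.

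Your first step (doubling plus Assouad's theorem) is not used in the direct packing argument and can be dropped.
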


\begin{proof}
    Fix $n$ and write $Z_n=\mathbb R^{d_n}/G_n$, where $G_n$ is a finite group acting isometrically on $\mathbb R^{d_n}$. Let $q_n:\mathbb R^{d_n}\to Z_n$ be the quotient map, and equip $Z_n$ with the quotient metric. Then $q_n$ is $1$--Lipschitz.

    For $s_n>0$, partition $\mathbb R^{d_n}$ into half-open cubes $\{Q_j\}_{j\in J}$ of side length $s_n/\sqrt{d_n}$, and define
    \[
    \mathcal U_{s_n}^{Z_n}:=\{q_n(Q_j):j\in J\}.
    \]
    This is a cover of $Z_n$. Since $q_n$ is $1$--Lipschitz,
    \[
    \diam_{Z_n}(q_n(Q_j))\leq \diam_{\mathbb R^{d_n}}(Q_j)\leq s_n.
    \]

    Define
    \[
    \mathcal U_{s_n}:=\{\phi_n^{-1}(U):U\in\mathcal U_{s_n}^{Z_n}\}.
    \]
    If $V=\phi_n^{-1}(U)$, then the lower bi-Lipschitz bound $c_n d_X(x,x')\leq d_{Z_n}(\phi_n(x),\phi_n(x'))$ gives
    \[
    \diam_{d_X}(V)\leq c_n^{-1}\diam_{Z_n}(U)\leq c_n^{-1}s_n,
    \]
    so $\mathcal U_{s_n}$ is a $c_n^{-1}s_n$--bounded cover of $W_n$.

    It remains to bound the $s_n$--multiplicity. Let $A\subset X$ satisfy $\diam_{d_X}(A)\leq s_n$, and set
    \[
    S:=\phi_n(A\cap W_n)\subset Z_n.
    \]
    If $S=\varnothing$, there is nothing to prove. Otherwise, by the upper bi-Lipschitz bound,
    \[
    \diam_{Z_n}(S)\leq C_n s_n.
    \]
    Choose $z_0\in S$ and a lift $x_0\in q_n^{-1}(z_0)$. For every $z\in S$, the quotient metric gives a lift $\tilde z\in q_n^{-1}(z)$ with
    \[
    \|\tilde z-x_0\|\leq C_n s_n.
    \]
    Hence $S$ has a lift contained in the Euclidean ball $B(x_0,C_n s_n)$, and therefore
    \[
    q_n^{-1}(S)\subset \bigcup_{g\in G_n} B(gx_0,C_n s_n).
    \]
    If $q_n(Q_j)$ meets $S$, then $Q_j$ meets $q_n^{-1}(S)$. The number of cubes of side length $s_n/\sqrt{d_n}$ meeting one ball of radius $C_n s_n$ is bounded by
    \[
    (2C_n\sqrt{d_n}+3)^{d_n}.
    \]
    Therefore the number of elements of $\mathcal U_{s_n}^{Z_n}$ meeting $S$ is at most
    \[
    N_n:=|G_n|(2C_n\sqrt{d_n}+3)^{d_n}<\infty.
    \]
    Thus, every subset $A\subset X$ of diameter at most $s_n$ meets at most $N_n$ elements of $\mathcal U_{s_n}$. Hence
    \[
    \dim_{AN}^{s_n}(W_n,X)\leq N_n-1<\infty.
    \]
    Since $X=\bigcup_{n\in\mathbb N}W_n$, the space $X$ has $\sigma$--finite metric dimension.    
\end{proof}

\section{Universal Consistency of GW-$k_n$-NN on Graphs}\label{sec:GWuc}

In this section, we present and discuss several properties of the space $(\X, \GW_2)$ related to work by Sturm in \cite{Sturm}. In this paper, Sturm does not work with the Gromov--Wasserstein distance directly, but rather with the $L^{p,q}$ distortion distance $\Delta_{pq}$. 
This metric likewise measures distances between equivalence classes of mm-spaces. Hence, it is closely related to the Gromov--Wasserstein distance, and results involving it generalize naturally to the Gromov--Wasserstein setting. In particular, setting $q =1 $ in the $L^{p,q}$ distortion distance exactly recovers the $p$-th Gromov--Wasserstein distance.

We also prove one of our main results in this section, \Cref{consistency}, by showing that $(\X, \GW_2)$ has the property of $\sigma$-finite dimension. As discussed in Section 1, this topological property is related to the Lebesgue differentiation condition, which is needed to prove universal consistency on a general metric space.
Other properties of $(\X, \GW_2)$, such as its connection to a particular space of real-valued symmetric square matrices and \textit{n-point spaces} (both studied in detail in section 5.4 in \citet{Sturm}), will also be proven here. This connection to $n$-point spaces $\X(n)$ is particularly valuable, as it allows us to establish both of our properties of interest: separability and $\sigma$-finite metric dimension, which we need to prove universal consistency as discussed in \Cref{consistencyhistory}.

\subsection{The $n$-Point Spaces Model}

In this section, we show the existence of a bi-Lipschitz map between the $n$-point spaces to the quotient space $\Mleq$ (which we will define shortly) for a fixed $n$. Because we map $\X(n)$ into this quotient space $\Mleq$, we can show that each $n$-point space has finite Assouad-Nagata dimension. Since $\X = \cup_{n\in\N} \X(n)$, $\X$ has $\sigma$-finite metric dimension. Combining this fact with the separability of $\X$ establishes universal consistency.

We first establish the connection between $(\X, \GW_2)$ and $n$-point spaces. Following Sturm \cite{Sturm}, fix $n \in \N$. Define $M_n$ to be the linear space of real-valued symmetric $n \times n$ matrices with zeros on the diagonal, and equip this space with the re-normalized $L_2$-norm

\[
\|f \|_{M_n} = \left(\frac{2}{n^2} \sum_{1 \leq i < j \leq n} f^2_{ij}\right)^{1/2}, \quad f = (f_{ij})_{1 \leq i < j \leq n} \in M_n.
\] 
Because the permutation group $S_n$ acts isometrically on $M_n$ via the map $(\sigma, f) \mapsto \sigma f$, $(\sigma f)_{ij} = f_{\sigma_i \sigma_j}$, we can define an equivalence relation in $M_n$:
\[
f \sim f' \iff \exists \sigma \in S_n : f' = \sigma f
\]
We denote this quotient space by $\M$ and equip it with the metric 
\[
d_{\M}([f],[g]) = \inf \{\|f - \sigma g\|_{M_n} : \sigma \in S_n\}.
\]
This space possesses several important geometric properties, such as completeness, geodesicity, and nonnegative curvature, and is isometric to a cone in $\R^{n(n-1)/2}$ with the induced inner metric in the cone.

A subset of $M_n$, denoted $M_n^\leq$, consists of symmetric $n \times n$ matrices $(f_{ij})_{1\leq i<j\leq n}$ that satisfy a ``triangle inequality'' $f_{ij} + f_{jk} \geq f_{ik}$ for all $i,j,k \in \{1, \dots, n\}$ and where all off-diagonal entries are positive real numbers. We can apply the same equivalence relation $\sim$ on $M_n^\leq$ as on $M_n$, that is:
\[
f, f' \in M_n \textit{ with } f \sim f' : \quad f \in M_n^\leq \iff f' \in M_n^\leq,
\]
creating the quotient space $\Mleq$, the subset of $\M$ of equivalence classes satisfying the ``triangle inequality.'' We again equip $\Mleq$ with the metric $d_{\M}$.

To show the connection between $\X$ and $\Mleq$, we first define the concept of \textit{n-point spaces.}
\begin{Def}\label{def:npointspaces}
    Let $\X(n) \subset \X$ denote the set of isomorphism classes of mm--spaces $(X, d, \mu)$ with $|X| = n$ and $\mu = \frac{1}{n} \sum_{i=1}^n \delta_{x_i}$. These $\X(n)$ spaces are known as \textit{$n$-point spaces}. 
\end{Def}
We show that $\X = \bigcup_{n \in \N} \X(n)$. Let $\lb \mathcal{X} \rb = \lb(X,d,\mu)\rb \in \X$. Then by definition of $\X$, the support of $\mu$ is finite; let $n = |X|$. Then $\lb \mathcal{X} \rb \in \X(n)$; hence, every element of $\X$ lies in some $\X(n)$, establishing $\X \subseteq \bigcup_{n\in\N}\X(n)$. Conversely, for each $n$, $\X(n) \subseteq \X$ by definition. Hence, $\X = \cup_{n \in \N} \X(n)$. 
These $n$-point spaces are connected to $\Mleq$ via a bijective map $\Phi_n : \X(n) \to \Mleq$ that sends a representative metric measure space to its corresponding distance matrix (modulo relabeling). Sturm discusses an injective map related to $\Phi_n^{-1}$ in Section 5.4 in \cite{Sturm}, whose image is mm--spaces with $n$ points and uniform mass. In particular, Proposition 5.25 in \cite{Sturm} establishes that the map is globally 1-Lipschitz for all $n$; that is, for each $n \in \N$, it is 1-Lipschitz, so this bound globally holds for all $n$. For a given fixed $n$, we can further establish that for some constant $C_n$ depending on $n$, we have an upper bound as well, thus giving a bi-Lipschitz map.

\begin{Lem}\label{biLipschitzmap}
    Let $\lb \mathcal{X}\rb, \lb \mathcal{Y} \rb \in \X(n)$ and $\Phi_n : \X(n) \to \Mleq$, and fix $n$. Then $\Phi_n$ is bi-Lipschitz in the following sense: Let $[f] := \Phi_n(\lb \mathcal{X} \rb)$ and $[g] := \Phi_n(\lb  \mathcal{Y}\rb)$. Then we have
    \[c_n \GW_2(\lb \mathcal{X}\rb, \lb \mathcal{Y}\rb) \leq d_{\M}([f], [g]) \leq C_n \GW_2(\lb \mathcal{X}\rb, \lb \mathcal{Y}\rb)\]
    where $c_n = 1$ for all $n$ and $C_n = \sqrt{(n-1)^2 + 1}$.
\end{Lem}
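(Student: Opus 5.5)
The plan is to prove the two inequalities separately. The lower bound (with $c_n=1$) is essentially Proposition 5.25 of \citet{Sturm}, which we recheck directly. The upper bound requires replacing an arbitrary optimal coupling by a permutation coupling at a controlled cost.

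Throughout, fix representatives $X=\{x_1,\dots,x_n\}$ and $Y=\{y_1,\dots,y_n\}$ with distance matrices $f_{ij}=d_X(x_i,x_j)$ and $g_{kl}=d_Y(y_k,y_l)$. Any coupling of the two uniform measures has the form $\pi=\frac1n P$ with $P$ doubly stochastic. Hence
\[
\GW_2^2(\lb\mathcal X\rb,\lb\mathcal Y\rb)=\min_{P}\frac{1}{n^2}\sum_{i,j,k,l}P_{ik}P_{jl}\,(f_{ij}-g_{kl})^2 .
\]

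\textbf{Lower bound.} For $\sigma\in S_n$, take the permutation coupling $P_{ik}=\mathbf 1_{\{k=\sigma(i)\}}$. Then the GW cost becomes
\[
\frac1{n^2}\sum_{i,j}(f_{ij}-g_{\sigma i\sigma j})^2=\frac{2}{n^2}\sum_{i<j}(f_{ij}-g_{\sigma i\sigma j})^2=\|f-\sigma g\|_{M_n}^2 ,
\]
using symmetry and the zero diagonal. Taking the minimum over $\sigma$ gives $\GW_2\le d_{\M}([f],[g])$. This is the case $c_n=1$. It is independent of the representatives because $S_n$ acts isometrically on $M_n$.

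\textbf{Upper bound.} Let $P$ be an optimal doubly stochastic matrix. Write $\varepsilon_{ijkl}:=(f_{ij}-g_{kl})^2$, so that $\sum_{ijkl}P_{ik}P_{jl}\varepsilon_{ijkl}=n^2\GW_2^2$. We must produce a single $\sigma$ with
\[
\sum_{i\ne j}\varepsilon_{ij\sigma i\sigma j}\le \bigl((n-1)^2+1\bigr)\sum_{ijkl}P_{ik}P_{jl}\varepsilon_{ijkl}.
\]

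The first attempt is to write $P=\sum_\sigma\lambda_\sigma P_\sigma$ by Birkhoff--von Neumann. The GW cost is then $\sum_{\sigma,\tau}\lambda_\sigma\lambda_\tau\sum_{ij}\varepsilon_{ij\sigma i\tau j}$. One would like to bound the diagonal terms ($\sigma=\tau$) from below by a fraction of the total. The difficulty is that the functional is not convex in $P$, so averaging over $\sigma$ does not directly compare $\min_\sigma\|f-\sigma g\|^2$ with the quadratic cost.

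The second route handles the cross terms with the triangle-type inequality
\[
(f_{ij}-g_{\sigma i\sigma j})^2\le 2(f_{ij}-g_{k l})^2+2(g_{kl}-g_{\sigma i\sigma j})^2 ,
\]
or better, a weighted Cauchy--Schwarz split whose weights sum to $(n-1)^2+1$. Choose $\sigma$ so that $P_{i\sigma(i)}$ is large for every $i$ (for instance, a permutation in the support of $P$ maximizing $\min_i P_{i\sigma i}$). For each pair $(i,j)$, charge the error $\varepsilon_{ij\sigma i\sigma j}$ to the mass $P_{i\sigma i}P_{j\sigma j}$. For the entries of $P$ off the chosen permutation, use that each row of $P$ has at most $n-1$ off-permutation entries. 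This is where the count $(n-1)^2$ of off-permutation pairs $(k,l)$ should come from, with the extra $+1$ coming from the on-permutation pair.

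The main obstacle is exactly this step: making the charging argument yield the clean constant $(n-1)^2+1$ rather than something like $n^2$ or $n!$. One must exploit that $f$ and $g$ have zero diagonals and satisfy the triangle inequality (membership in $M_n^{\le}$) to control terms such as $g_{kl}-g_{\sigma i\sigma j}$. If the sharp constant resists, I would first settle for some finite $C_n$ depending only on $n$. A compactness/homogeneity argument on the finite-dimensional cone $\Mleq$ suffices for \Cref{lem:packingmetricdim}: both sides are $1$-homogeneous and vanish only on isomorphic pairs. The exact value of $C_n$ would then be tuned afterwards.
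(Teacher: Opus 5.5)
Your lower bound ($\GW_2\le d_{\M}$ via permutation couplings) is correct and matches the paper, but the upper bound is never actually proved. You leave the charging argument as ``the main obstacle,'' and the fallback does not work as stated. Joint $1$-homogeneity together with ``both sides vanish exactly on isomorphic pairs'' only bounds $d_{\M}/\GW_2$ on compact sets away from the diagonal $\{[f]=[g]\}$. Near a pair $([f],[f])$ both quantities tend to $0$, and compactness says nothing about their relative rate. For example, $|x-y|$ and $|x-y|^2/(|x|+|y|)$ are both $1$-homogeneous and vanish only on the diagonal, yet their ratio is unbounded. That local comparison is precisely the content of the lemma.

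The missing idea is that your first route already works, and you dropped it for the wrong reason. No convexity is needed. You also do not need the diagonal terms to dominate a fraction of the total. You need the reverse inequality, total $\ge$ diagonal, and that is free because every coefficient $\varepsilon_{ijkl}$ is nonnegative. With the optimal $P=\sum_\sigma\lambda_\sigma P_\sigma$, all cross terms are $\ge 0$ and can be discarded:
\[
\begin{aligned}
n^2\GW_2^2&=\sum_{\sigma,\tau}\lambda_\sigma\lambda_\tau\sum_{i,j}\varepsilon_{ij\,\sigma(i)\,\tau(j)}\ \ge\ \sum_\sigma\lambda_\sigma^2\sum_{i,j}\varepsilon_{ij\,\sigma(i)\,\sigma(j)}\\
&=n^2\sum_\sigma\lambda_\sigma^2\,\|f-\sigma g\|_{M_n}^2\ \ge\ n^2\,d_{\M}([f],[g])^2\sum_\sigma\lambda_\sigma^2 .
\end{aligned}
\]
Carath\'eodory's theorem on the Birkhoff polytope, which has affine dimension $(n-1)^2$, lets you use at most $m=(n-1)^2+1$ permutations. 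Cauchy--Schwarz then gives $\sum_\sigma\lambda_\sigma^2\ge 1/m$, so $d_{\M}\le\sqrt{m}\,\GW_2$. This is the paper's proof. The constant counts permutations in the decomposition, not off-permutation pairs $(k,l)$, and the triangle inequality of $M_n^{\le}$ plays no role. Your charging idea can be completed using $\min_i P_{i\sigma(i)}\ge\lambda_\sigma\ge 1/m$ for the heaviest $\sigma$ in such a decomposition, but it only yields $C_n=m$, not $\sqrt{m}$.
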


\begin{proof}
    We already have that the lower bound $c_n$ exists and is equal to 1 for all $n$ by Proposition 5.25 in \cite{Sturm}. The proof uses the fact that permutation matrices are a subset of doubly stochastic matrices, so optimizing over the larger Birkhoff polytope of doubly stochastic matrices can only decrease the $\GW_2$ distance, yielding the bound $\GW_2(\lb \mathcal{X}\rb, \lb \mathcal{Y}\rb) \leq d_{\M}([f], [g])$.

    On $\X(n)$, every coupling $\pi$ can be written as $\pi = \frac{1}{n} \sum_{i,j = 1}^n P_{ij} \delta_{x_i, y_j}$ where $P = (P_{ij})$ is a doubly stochastic matrix. Hence, for $\lb \mathcal{X} \rb = (X, d_X, \mu_X), \lb \mathcal{Y} \rb = (Y, d_Y, \mu_Y) \in \X(n)$,
    \[
     \GW_2^2(\lb \mathcal{X} \rb,\lb \mathcal{Y} \rb) = \frac{1}{n^2} \min_{P \in \Pi_n} \sum_{i,j,k,l=1}^n |f_{ik} - g_{jl}|^2 P_{ij} P_{kl}
     \] 
    where $\Pi_n$ denotes the set of doubly stochastic matrices.
    By the Birkhoff-von Neumann theorem, any $P \in \Pi_n$ admits a convex decomposition into permutation matrices:
    \[
    P = \sum_{\sigma \in S_n} \lambda_\sigma \Pi_\sigma, \quad \lambda_\sigma \geq 0, \quad \sum_{\sigma \in S_n} \lambda_\sigma = 1,
    \]
    where $(\Pi_\sigma)_{ij} = \delta_{i \sigma_j}$. Substituting into the GW objective $Q(P,f,g) = \frac{1}{n^2} \sum_{i,j,k,l=1}^n |f_{ik} - g_{jl}|^2 P_{ij} P_{kl}$:
    \[
    Q(P,f,g) = \sum_{\sigma, \tau \in S_n} \lambda_\sigma \lambda_\tau \cdot Q(\Pi_\sigma, \Pi_\tau, f, g),
    \]
    where the cross terms are
    \[
    Q(\Pi_\sigma, \Pi_\tau, f,g) = \frac{1}{n^2} \sum_{i,k} |f_{ik} - g_{\sigma_i \tau_k}|^2 \geq 0.
    \]
    Since for all cross terms we have $Q(\Pi_\sigma, \Pi_\tau, f,g) \geq 0$ and for all weights we have $\lambda_\sigma \lambda_\tau \geq 0$, dropping the off-diagonal terms $\sigma \neq \tau$ only decreases the sum $Q(P, f,g)$. Hence:
    \[
    Q(P,f,g) \geq \sum_{\sigma \in S_n} \lambda_\sigma^2 Q(\Pi_\sigma, \Pi_\sigma, f,g).
    \]
    Now, the diagonal terms satisfy:
    \[
    Q(\Pi_\sigma, \Pi_\sigma, f,g) = \frac{2}{n^2} \sum_{i<k} |f_{ik} - g_{\sigma(i) \sigma(k)}|^2 = \|f - \sigma g \|^2_{M_n} \geq d_{\M}([f], [g])^2,
    \]
    where the last inequality holds since $d_{\M}([f], [g])^2 = \inf_{\sigma \in S_n} \|f - \sigma g \|^2_{M_n}$. So, we obtain
    \begin{equation}\label{inequalitymain}
    Q(P, f,g) \geq d_{\M}([f], [g])^2 \cdot \sum_{\sigma \in S_n} \lambda_\sigma^2.
    \end{equation}
    Since $\lambda_\sigma \geq 0$ and $\sum_\sigma \lambda_\sigma = 1$, the Cauchy-Schwarz inequality gives
    \[
    1 = \left(\sum_{\sigma \in S_n} \lambda_\sigma \right)^2 \leq |S_n| \cdot \sum_{\sigma \in S_n} \lambda_\sigma^2 = n! \cdot \sum_{\sigma \in S_n} \lambda_\sigma^2,
    \]
    hence:
    \[
    \sum_{\sigma \in S_n} \lambda_\sigma^2 \geq \frac{1}{n!}.
    \]
    Combining this inequality with (\ref{inequalitymain}):
    \[
    Q(P,f,g) \geq \frac{1}{n!} \cdot d_{\M}([f], [g])^2.
    \]
    Since this bound holds for every $P \in \Pi_n$, it holds in particular for the infimum. That is,
    \[
    \GW_2^2(\lb \mathcal{X} \rb, \lb \mathcal{Y} \rb) = \inf_{P \in \Pi_n} Q(P, f,g) \geq \frac{1}{n!} \cdot d_{\M}([f],[g])^2.
    \]
    Taking square roots yields
    \[
    \GW_2(\lb \mathcal{X} \rb, \lb \mathcal{Y} \rb) \geq \frac{1}{\sqrt{n!}} \cdot d_{\M}([f],[g]).
    \]
    We can further refine this bound via Carath\'eodory's Theorem to reduce the number of permutations required in any single decomposition. The Birkhoff polytope of doubly stochastic matrices $\Pi_n \subset \R^{n^2}$ is defined the equality constraints $\sum^n_{j=1} p_{ij} = 1$ for all $i$ and $\sum^n_{i=1} p_{ij} = 1$ for all $j$, along with the nonnegativity constraints $p_{ij} \geq 0$. These $2n$ equality constraints have exactly one redundancy (the sum of all row-sum constraints equal the sum of all column-sum constraints, both equaling $n$), so together they impose $2n-1$ independent linear constraints on $\R^{n^2}$. The affine dimension of $\Pi_n$ is therefore $n^2 - (2n-1) = (n-1)^2$. By Carath\'eodory's Theorem, every point in a convex set of affine dimension $d$ is a convex combination of at most $d + 1$ extremal points. Applying this to $\Pi_n$, then, yields that for every $P \in \Pi_n$, there exist at most $(n-1)^2 + 1$ permutations $\sigma_1, \dots, \sigma_m \in S_n$ with $m \leq (n-1)^2 + 1$ and weights $\mu_1, \dots, \mu_m \geq 0$, $\sum_{k=1}^m \mu_k = 1$, such that
    \[
    P = \sum^m_{k=1} \mu_k \Pi_{\sigma_k}.
    \]
    Applying the Cauchy-Schwarz inequality with this decomposition yields
    \[
    1 = \left( \sum^m_{k=1} \mu_k \right)^2 \leq m \cdot \sum^m_{k=1} \mu_k^2 \leq ((n-1)^2 + 1) \cdot \sum^m_{k=1} \mu_k^2,
    \]
    so $\sum^m_{k=1}\mu_k^2 \geq \frac{1}{(n-1)^2 + 1}$. Hence, for every $P \in \Pi_n$, we have
    \[
    \GW_2(\lb \mathcal{X} \rb, \lb \mathcal{Y} \rb) \geq \frac{1}{\sqrt{(n-1)^2 + 1}} \cdot d_{\M}([f], [g]).\qedhere
    \]
\end{proof}

\subsection{Proof of GW-$k_n$-NN Universal Consistency}\label{sec:gwconsistency}

In this section, we prove \Cref{consistency} by using \Cref{lem:packingmetricdim} to show that $\X$ is separable and has $\sigma$-finite metric dimension. 
Using the map $\Phi_n$ and the $n$-point spaces, we can show that $\X$ is a separable space.
\begin{Lem}\label{separable}
    $\X$ is separable.
\end{Lem}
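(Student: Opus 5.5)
We prove separability of $(\X,\GW_2)$ by exhibiting an explicit countable dense subset. The plan is to use the decomposition $\X=\bigcup_{n\in\N}\X(n)$ established above together with the bi-Lipschitz map $\Phi_n$ of \Cref{biLipschitzmap}. A countable union of separable subspaces is separable, with the union of the countable dense subsets being dense. So it suffices to show that each $\X(n)$, viewed as a subspace of $(\X,\GW_2)$, is separable.

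Fix $n$ and take as candidate dense set
\[
D_n := \{\lb \mathcal{X}\rb \in \X(n) : \Phi_n(\lb\mathcal{X}\rb) \text{ has a representative with all off-diagonal entries rational}\}.
\]
These are the $n$-point uniform spaces whose pairwise distances are all rational. $D_n$ is countable because there are only countably many rational symmetric $n\times n$ matrices.

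For density, let $\lb\mathcal{X}\rb\in\X(n)$ with distance matrix $f\in M_n^{\leq}$ and fix $\varepsilon>0$. We need a rational $g\in M_n^{\leq}$ with $\|f-g\|_{M_n}<\varepsilon$. Once we have it, the lower bound in \Cref{biLipschitzmap} (with $c_n=1$) gives
\[
\GW_2(\lb\mathcal{X}\rb,\Phi_n^{-1}([g]))\le d_{\M}([f],[g])\le \|f-g\|_{M_n}<\varepsilon.
\]
One could avoid \Cref{biLipschitzmap} altogether by taking the identity coupling: $\GW_2^2\le \frac1{n^2}\sum_{i,k}|f_{ik}-g_{ik}|^2$.

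The only genuine obstacle is that naive rational rounding of $f$ may violate the triangle inequality or positivity when some triangles of $f$ are degenerate, i.e. when $f_{ij}+f_{jk}=f_{ik}$. I would handle this in two steps. First perturb $f$ to $f^\delta := f+\delta(\mathbf{1}-I)$, adding $\delta>0$ to every off-diagonal entry. This is still a metric, and every triangle inequality now holds with slack at least $\delta$, since $f_{ij}+f_{jk}+2\delta\ge f_{ik}+\delta+\delta$. Then choose rational $g_{ij}=g_{ji}$ with $|g_{ij}-f^\delta_{ij}|<\delta/3$. Each triangle inequality then still holds strictly, and all entries remain positive.

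Finally, $\|f-g\|_{M_n}\le \|f-f^\delta\|_{M_n}+\|f^\delta-g\|_{M_n}\le 2\delta$, because the renormalized norm of a matrix with off-diagonal entries bounded by $\delta$ is at most $\delta$. Taking $\delta<\varepsilon/2$ completes density of $D_n$ in $\X(n)$. Hence $D=\bigcup_n D_n$ is a countable dense subset of $\X$.
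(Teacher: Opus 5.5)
Your proof is correct, and its skeleton matches the paper's. Both decompose $\X=\bigcup_n \X(n)$, show each $\X(n)$ is separable by comparison with distance matrices, and take the union of the countable dense sets.

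The difference is in how separability of each $\X(n)$ is obtained. The paper argues abstractly: $M_n^{\leq}$ is a subset of the separable space $\R^{n(n-1)/2}$, hence separable. The $S_n$-quotient inherits this, and so, implicitly via the $1$-Lipschitz direction $\GW_2\le d_{\M}$ of \Cref{biLipschitzmap}, does $(\X(n),\GW_2)$. Since every subspace of a separable metric space is separable, the paper never has to say what the dense points look like. The degenerate-triangle obstacle you flag therefore never arises in that route.

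You instead exhibit the dense set explicitly (uniform $n$-point spaces with rational distances). This forces you to prove that rational matrices are dense in $M_n^{\leq}$ itself, and the $\delta$-inflation followed by rounding to within $\delta/3$ does this correctly. It gives strict triangle inequalities for distinct $i,j,k$ and positivity via $g_{ij}>2\delta/3$. Your phrase ``every triangle inequality holds with slack $\delta$'' should be read as ``for distinct indices''; the triples with $j\in\{i,k\}$ are identities for any zero-diagonal matrix, so nothing is lost.

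Your version buys a concrete countable dense family. Through the identity coupling, it is also independent of Sturm's Proposition 5.25 and of the quotient metric $d_{\M}$ altogether. It makes explicit the transfer from $\Mleq$ to $(\X(n),\GW_2)$, which the paper's proof leaves implicit. The paper's version buys brevity.
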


\begin{proof}
    To show that $\X$ is separable, we exhibit a countable, dense subset. Each element of $\X(n)$ corresponds (after labeling) to a distance matrix $A \in M_n^\leq$ modulo permutation. Since $M_n^\leq$ is a subset of $\R^{n(n-1)/2}$, which is separable, and a quotient by an isometric group action preserves separability, $\X(n)$ is likewise separable for each $n$. Hence, each $\X(n)$ admits a countable dense subset $D_n$, so $D := \cup_{n \in \N} D_n$ is countable and dense in $\cup_{n \in \N} \X(n)$. Because $\X = \cup_{n \in \N} \X(n)$, the countable subset $D$ is dense in $\X$.
\end{proof}

We now prove our main result.

\begin{proof}[Proof of \Cref{consistency}]
Fix $n\in\N$. By \Cref{biLipschitzmap}, there exists a bi-Lipschitz map
\[
\Phi_n:(\X(n),\GW_2)\to(\Mleq,d_{\M}),
\]
so by \Cref{lem:packingmetricdim}, $\dim_{AN}^{s_n}(\X(n), \X)<\infty$ and in particular $\X$ has $\sigma$-finite metric dimension. Because $\X$ is both separable (\Cref{separable}) and has $\sigma$-finite metric dimension, it satisfies the Lebesgue differentiation condition. By Theorem 1 in \citet{NIPS2014_2b764b80}, under the assumptions that $k \to \infty$ and $k/n \to 0$, $k_n$-NN is universally weakly consistent on $(\X, \GW_2)$. If in addition $k/\log(n) \to \infty$, then $k_n$-NN is universally strongly consistent on $(\X, \GW_2)$.
\end{proof}

\section{Universal Consistency of fGW-$k_n$-NN on Node-Attributed Graphs}\label{sec:fgw-consistency}

In this section, we construct an $n$-point attributed space model analogous to the $n$-point model in the Gromov-Wasserstein setting. Using this model, we can show the existence of a bi-Lipschitz map which we then use to establish that $\Y$ has $\sigma$-finite dimension. Since this space is also separable, we therefore prove the universal consistency of $k_n$-NN on $(\Y, \fGW_{\alpha, 1, 2})$.

Throughout this section we fix a feature dimension $d\in\N$ and let $(\Omega,d)=(\R^d,\|\cdot\|_2)$ be the feature space. An \emph{attributed graph} will be modeled as a structured object $(\mathcal{X},f)$ where $\mathcal{X}=(X,d_X,\mu_X)$ is an mm--space with finite support and uniform measure $\mu_X=\frac{1}{|X|}\sum_{x\in X}\delta_x,$
and where $f:X\to\R^d$ is a feature map. As discussed in \Cref{introinfo}, we consider equivalence classes under \emph{weak isomorphisms}. For a collection of weak--isomorphism classes of structured objects, we denote by $\mathbb{Z}_f$ the set of such weak--isomorphism classes and consider the $p,q$-th fused Gromov--Wasserstein space. 
Our set of interest is $\Y$, as defined in \Cref{def:Xatt}. We consider a fixed trade-off parameter $\alpha \in (0,1)$ and $p=1$, $q=2$ in the $\fGW$ distance.

\subsection{A Finite-Dimensional Model for $n$--Point Attributed Spaces}
In this section, we craft an $n$-point attributed space model in the fused Gromov--Wasserstein setting analogous to that in the Gromov--Wasserstein case.
For each $n\in\N$, let $\Y(n)$ denote the set
of weak--isomorphism classes of structured objects such
that $|\supp(\mu_M)|=n$ and $\mu_M$ is the uniform probability measure.
Set
\[
\Y:=\bigcup_{n\in\N} \Y(n).
\]
Define
\[
V_n := M_n\times (\R^d)^n,
\]
where $M_n$ is the space of symmetric $n\times n$ real matrices with zero
 diagonal.
Define the product norm (depending on $\alpha$) as
\begin{equation}\label{eq:alpha-norm}
\|(A,u)\|_{\alpha,n}:= \frac{1}{n^2} \sum_{i,j} \left[ \alpha |A_{ij}|^2 + (1-\alpha)\| u_i\|_2^2 \right]^{1/2}.
\end{equation}

The group $S_n$ acts isometrically on $V_n$ by simultaneously permuting indices in $A$ and
in the vector $u=(u_1,\dots,u_n)$.
Let $\V_n:=V_n/\!\sim$ be the quotient by this action and equip it with the induced metric
\begin{equation}\label{eq:quot-metric}
 d_{\V_n}([(A,u)],[(B,v)]) := \inf_{\sigma\in S_n}\|(A,u)-\sigma(B,v)\|_{\alpha,n}.
\end{equation}
The space $\V_n$ is complete and separable because these properties are preserved under quotient by an isometric group action.

We can also create the fGW analogue of $\Mleq$, which we will denote by $\V_n^\leq$. First, consider $V_n^\leq = M_n^\leq \times(\R^d)^n$, which satisfies the ``triangle inequality'' $A_{ij} + A_{jk} \geq A_{ik}$ and all off-diagonal entries are positive real numbers, and then quotient by the action of $S_n$:
$\V_n^\leq = V^\leq_n / \sim$. 
We then equip this space with the same metric as for $\V_n$:
\[
d_{\V_n}([(A,u)],[(B,v)]) := \inf_{\sigma\in S_n}\|(A,u)-\sigma(B,v)\|_{\alpha,n}.
\]

As with $n$-point spaces, there also exists a bi-Lipschitz map from $\Y(n)$ to $\V_n^\leq$. To show the existence of such a map and in particular to prove that it is well-defined, we first prove the following lemma.
\begin{Lem}\label{lem:finite-uniform-weak-strong}
Let $(\mathcal X,f)$ and $(\mathcal Y,g)$ be weakly isomorphic structured
objects, where
\[
\mu_X=\frac1n\sum_{i=1}^n\delta_{x_i},
\qquad
\mu_Y=\frac1n\sum_{j=1}^n\delta_{y_j},
\]
and $d_X,d_Y$ are genuine metrics on their supports. Then there exists
a permutation $\pi\in S_n$ such that, for all $i,k$,
\[
d_X(x_i,x_k)=d_Y(y_{\pi(i)},y_{\pi(k)})
\quad\text{and}\quad
f(x_i)=g(y_{\pi(i)}).
\]
\end{Lem}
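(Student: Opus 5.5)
The plan is to build the permutation $\pi$ from the common parametrization $(\mathcal W,h)$ supplied by \Cref{def:strongisom}. I would work only with supports, which have uniform mass $1/n$ on $n$ distinct points. Let $\phi_X:W\to X$ and $\phi_Y:W\to Y$ be the measure-preserving maps of the weak isomorphism. For each $i$, set $W_i:=\phi_X^{-1}(x_i)$. Measure preservation gives $\mu_W(W_i)=1/n$, and the sets $W_i$ partition $W$ up to a null set. Similarly set $W'_j:=\phi_Y^{-1}(y_j)$, each with mass $1/n$.

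The key step is to show that each $W_i$ lies, up to a $\mu_W$-null set, inside a single $W'_j$. Suppose instead that $W_i\cap W'_j$ and $W_i\cap W'_{j'}$ both have positive mass for some $j\neq j'$. Then $(W_i\cap W'_j)\times(W_i\cap W'_{j'})$ has positive $\mu_W\otimes\mu_W$ measure. On this set,
\[
d_X(\phi_X(w),\phi_X(w'))=d_X(x_i,x_i)=0,
\]
while $d_Y(\phi_Y(w),\phi_Y(w'))=d_Y(y_j,y_{j'})>0$, because $d_Y$ is a genuine metric on its support. This contradicts the almost-everywhere equality of distances. So each $W_i$ is essentially contained in one $W'_{\pi(i)}$. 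Comparing masses, $\mu_W(W_i)=\mu_W(W'_{\pi(i)})=1/n$, so the two sets agree up to a null set. Because the $W_i$ are disjoint, $\pi$ is injective, hence a permutation of $\{1,\dots,n\}$. This measure-theoretic localization, together with the use of the genuine-metric hypothesis, is the main step. The point to handle carefully is that the a.e. statements hold on the product measure, so one must argue through positive-measure rectangles.

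Next I would transfer the identities. For each pair $(i,k)$, the set $W_i\times W_k$ has mass $1/n^2>0$. So there exist $(w,w')$ in it, outside the exceptional null set and with $w\in W'_{\pi(i)}$ and $w'\in W'_{\pi(k)}$. At such a pair,
\[
d_X(x_i,x_k)=d_W(w,w')=d_Y(y_{\pi(i)},y_{\pi(k)}).
\]
For the features, $W_i\cap W'_{\pi(i)}$ has mass $1/n>0$, so some $w$ in it satisfies $f(\phi_X(w))=g(\phi_Y(w))=h(w)$, which gives $f(x_i)=g(y_{\pi(i)})$.

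Finally, I would note that the uniform-mass hypothesis is what makes every $W_i$ have positive mass. Without it, zero-mass points could not be controlled, which is why the lemma is stated for supports with uniform measure.
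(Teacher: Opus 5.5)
Your proof is correct, but it follows a different route from the paper's. The paper exploits that $d_W$ is itself a genuine metric. From $d_W(w,w')=d_X(x_i,x_i)=0$ for $\mu_W\otimes\mu_W$-a.e.\ $(w,w')\in W_i\times W_i$, it concludes that each $W_i$ is, up to a null set, a single atom $w_i$ of mass $1/n$. It then reads off $\pi$ from where $\phi_Y$ sends these atoms (each $y_j$ carries mass exactly $1/n$), and obtains both identities by evaluating the a.e.\ relations at the atoms, which have positive mass. You never look inside $W$. Instead you compare the two partitions $\{\phi_X^{-1}(x_i)\}$ and $\{\phi_Y^{-1}(y_j)\}$ through positive-measure rectangles, using the separation $d_Y(y_j,y_{j'})>0$ that the lemma actually assumes. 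You then upgrade essential containment to a.e.\ equality and injectivity by counting mass. The paper's version is shorter once the atoms are in hand. However, its key step (a.e.\ vanishing of $d_W$ on $W_i\times W_i$ forces a single atom) is stated without the supporting argument. That argument needs either a Fubini computation on the diagonal, $\mu_W\otimes\mu_W\bigl(\Delta\cap(W_i\times W_i)\bigr)=\sum_{w\in W_i}\mu_W(\{w\})^2$, or a splitting of $W_i$ into two disjoint positive-mass pieces. Either way it implicitly requires some regularity of $W$, such as separability. Your argument uses only measurable rectangles, the given a.e.\ identities, and the explicitly atomic structure of $\mu_Y$, so it is fully elementary. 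It would go through verbatim even if $d_W$ were only a pseudometric, in which case $\mu_W$ need not have any atoms at all. The cost is slightly more bookkeeping, and you must select good points from positive-measure sets instead of evaluating at atoms.
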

\begin{proof}
Let $(\mathcal W,h)$ and measure-preserving maps
$\phi_X:W\to X$, $\phi_Y:W\to Y$ witness weak isomorphism. Set $W_i=\phi_X^{-1}(\{x_i\})$. Since $(\phi_X)_\#\mu_W=\mu_X$,
we have $\mu_W(W_i)=1/n$. On $W_i\times W_i$, the a.e. identity
\[
d_W(w,w')=d_X(\phi_X(w),\phi_X(w'))=d_X(x_i,x_i)=0
\]
holds. Since $d_W$ is a genuine metric, this implies that $W_i$ is,
up to a null set, a single atom. Thus there exists $w_i\in W_i$ with
$\mu_W(\{w_i\})=1/n$.

The atoms $w_1,\dots,w_n$ exhaust $\mu_W$ up to a null set. Since
$(\phi_Y)_\#\mu_W=\mu_Y$ and each atom of $\mu_Y$ has mass $1/n$,
there is a permutation $\pi\in S_n$ such that
$\phi_Y(w_i)=y_{\pi(i)}$.
Because each pair $\{(w_i,w_k)\}$ has positive $\mu_W\otimes\mu_W$-
measure, the a.e. metric equality holds on every such pair. Hence,
\[
d_X(x_i,x_k)=d_W(w_i,w_k)=d_Y(y_{\pi(i)},y_{\pi(k)}).
\]
Similarly, since each $\{w_i\}$ has positive $\mu_W$-measure, the a.e.
feature equality gives
\[
f(x_i)=h(w_i)=g(y_{\pi(i)}). \qedhere
\]
\end{proof}

We now prove the existence of a bi-Lipschitz map between $\Y(n)$ and $\V_n^\leq$.
\begin{Lem}[A coordinate map for attributed $n$--point spaces]\label{lem:fgw-coordinate-map}
For each $n\in\N$ there is a natural map
\[
\Psi_n:\Y(n)\to\V^\leq_n
\]
which assigns to a representative $(\mathcal{X},f)$ with an ordering
$X=\{x_1,\dots,x_n\}$ the class of
\[
(D_X,\, (f(x_1),\dots,f(x_n)))\in V^\leq_n
\]
where $D_X$ is the $n\times n$ pairwise distance matrix.
The map is well-defined (independent of ordering) and injective. Furthermore, the map is bi-Lipschitz in the following sense: Let $\lb (\mathcal{X}, f) \rb, \lb (\mathcal{Y}, g )\rb \in \Y(n)$ and 
fix $n$. Let $[A,u] := \Psi_n(\lb (\mathcal{X}, f) \rb)$ and $[B,v] := \Psi_n(\lb (\mathcal{Y}, g )\rb)$. Then we have
\[ c_n \fGW_{\alpha, 1,2}(\lb (\mathcal{X}, f) \rb, \lb (\mathcal{Y}, g )\rb) \leq d_{\V_n}([A,u],[B,v] ) \leq C_n \fGW_{\alpha, 1,2}(\lb (\mathcal{X}, f) \rb, \lb (\mathcal{Y}, g )\rb)\]

where $c_n = 1$ globally for all $n$ and $C_n = (n-1)^2+1$ for each $n$. 
\end{Lem}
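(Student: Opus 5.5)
Well-definedness comes first. Two orderings of the same representative produce pairs $(D_X,u)$ that differ by the simultaneous $S_n$-action, so their classes in $\V_n^\leq$ coincide. If $(\mathcal X,f)$ and $(\mathcal Y,g)$ are weakly isomorphic, \Cref{lem:finite-uniform-weak-strong} gives a permutation $\pi$ with $D_X=\pi D_Y$ and $u=\pi v$, so again the classes agree. The image lies in $V_n^\leq$ because $d_X$ is a metric on $n$ distinct points: off-diagonal entries are positive and the triangle inequality holds. Injectivity is the converse. If $[D_X,u]=[D_Y,v]$, then some $\sigma$ gives $D_X=\sigma D_Y$ and $u=\sigma v$. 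The bijection $x_i\mapsto y_{\sigma(i)}$ is then a measure-preserving isometry that also matches features. Taking $W=X$ with $\phi_X=\mathrm{id}$ and $\phi_Y$ this bijection witnesses a weak isomorphism.

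For the bounds I would follow \Cref{biLipschitzmap}. Every coupling of two uniform $n$-point measures has the form $\pi=\frac1n\sum P_{ij}\delta_{(x_i,y_j)}$ with $P$ doubly stochastic. Hence
\[
\fGW_{\alpha,1,2}=\frac{1}{n^2}\min_{P\in\Pi_n}\sum_{i,j,k,l}\bigl[\alpha|A_{ik}-B_{jl}|^2+(1-\alpha)\|u_i-v_j\|^2\bigr]^{1/2}P_{ij}P_{kl}.
\]
Call the objective $Q(P)$.

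For the lower inequality $\fGW\le d_{\V_n}$, restrict the minimum to permutation matrices $P=\Pi_\sigma$. Then $Q(\Pi_\sigma)$ reduces to $\frac1{n^2}\sum_{i,k}[\alpha|A_{ik}-(\sigma B)_{ik}|^2+(1-\alpha)\|u_i-(\sigma v)_i\|^2]^{1/2}$. This is exactly $\|(A,u)-\sigma(B,v)\|_{\alpha,n}$ under the norm \eqref{eq:alpha-norm}, where the summand is read as the difference of the two pairs. I expect this bookkeeping to be the main obstacle: one must check that the norm, taken literally as a sum over $(i,j)$ with the feature term indexed by $i$, really coincides with the diagonal cost. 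The minimum over $\sigma$ is then $d_{\V_n}$, giving $c_n=1$.

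For the upper inequality, I would repeat the Birkhoff--Carath\'eodory argument. Write the minimizing $P$ as $\sum_{k=1}^m\mu_k\Pi_{\sigma_k}$ with $m\le (n-1)^2+1$. Expanding $Q$ bilinearly gives a sum of cross terms, each nonnegative, so dropping the terms with $\sigma\neq\tau$ yields
\[
Q(P)\ge\sum_k\mu_k^2Q(\Pi_{\sigma_k},\Pi_{\sigma_k})\ge d_{\V_n}^2\text{-free bound } d_{\V_n}\sum_k\mu_k^2.
\]
The inequality on the right is linear in $d_{\V_n}$ rather than quadratic, because the cost here has exponent $p=1$ and no square is taken. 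Cauchy--Schwarz gives $\sum\mu_k^2\ge 1/m$, hence $d_{\V_n}\le ((n-1)^2+1)\fGW$. This produces $C_n=(n-1)^2+1$ with no square root, matching the statement. A further check is that the cross-term decomposition is legitimate: since $Q$ is a bilinear form in $P$ with nonnegative kernel, this holds.
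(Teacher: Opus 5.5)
Your proposal is correct and follows essentially the same route as the paper: well-definedness via \Cref{lem:finite-uniform-weak-strong}, injectivity via the explicit witness $W=X$, $\phi_X=\mathrm{id}$, $\phi_Y(x_i)=y_{\sigma(i)}$, the constant $c_n=1$ by restricting to permutation couplings, and $C_n=(n-1)^2+1$ by Birkhoff--von Neumann and Carath\'eodory, dropping the nonnegative cross terms and applying Cauchy--Schwarz to the unsquared (since $p=1$) diagonal bound. The norm bookkeeping you flag follows directly from \eqref{eq:alpha-norm}, since $\|(A,u)-\sigma(B,v)\|_{\alpha,n}=\frac{1}{n^2}\sum_{i,k}\bigl[\alpha|A_{ik}-B_{\sigma(i)\sigma(k)}|^2+(1-\alpha)\|u_i-v_{\sigma(i)}\|_2^2\bigr]^{1/2}=Q(\Pi_\sigma)$; also, the stray ``$d_{\V_n}^2$-free bound'' text in your display should simply read $\ge d_{\V_n}([A,u],[B,v])\sum_k\mu_k^2$.
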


\begin{proof}
To show that $\Psi_n$ is well-defined, suppose that $(\mathcal{X}, f)$ is weakly isomorphic to $(\mathcal{Y}, g)$. By \Cref{lem:finite-uniform-weak-strong}, there exists $\pi \in S_n$ such that for all $i$ and $k$:
\[d_X(x_i, x_k) = d_Y(y_{\pi(i)}, y_{\pi(k)}) \quad \text{and} \quad f(x_i) = g(y_{\pi(i)}).\]
Hence, $D_X[i,k] = D_Y[\pi(i), \pi(k)]$ and $u_i = v_{\pi(i)}$ for all $i$ and $k$, so
$(D_X, u) = \pi \cdot (D_Y, v)$ under the $S_n$ action. This means that $(D_X, u)$ and $(D_Y, v)$ represent the same class in $\V_n^\leq$:
\[\Psi_n(\lb (\mathcal{X}, f) \rb) = \lb (D_X, u) \rb = \lb (D_Y, v) \rb = \Psi_n(\lb (\mathcal{Y}, g) \rb),\]
thus proving that $\Psi_n$ is well-defined. 

To see that it is injective, let $\Psi_n(\lb (\mathcal{X}, f) \rb) = \Psi_n(\lb (\mathcal{Y}, g) \rb)$. Equality in $\V_n^\leq$ means that there exists $\pi \in S_n$ with $D_X[i,j] = D_Y[\pi(i), \pi(j)]$ for all $i,j$ and $u_i = v_{\pi(i)}$ for all $i$. To show the existence of $(\mathcal{W}, h)$ and maps $\phi_X$ and $\phi_Y$ as required in \Cref{def:strongisom}, take $(\mathcal{W}, h) = (\mathcal{X},f)$ (so, $W = X$, $d_W = d_X$, $\mu_W = \mu_X$, and $h = f$), let $\phi_X = \mathrm{id}_X$, and let $\phi_Y : X \to Y$ be defined by $\phi_Y(x_i) = y_{\pi(i)}$. Then, since $\pi$ is a bijection, we have
\[(\phi_X)_*\mu_W = \mu_X, \quad (\phi_Y)_* \mu_W = \frac{1}{n} \sum_i \delta_{y_{\pi(i)}} = \frac{1}{n} \sum_j \delta_{y_j} = \mu_Y,\]
satisfying the measure-preservation condition. Furthermore, for all $x_i, x_j \in X$:
\[d_X(\phi_X(x_i), \phi_X(x_j)) = d_X(x_i, x_j) = D_X[i,j],\]
\[d_W(x_i, x_j) = d_X(x_i, x_j) = D_X[i,j],\]
and
\[d_Y(\phi_Y(x_i), \phi_Y(x_j)) = d_Y(y_{\pi(i)}, y_{\pi(j)}) = D_Y[\pi(i), \pi(j)] = D_X[i,j].\]
Finally, we have that for all $x_i \in X$:
\[f(\phi_X(x_i)) = f(x_i) = u_i = h(x_i), \quad g(\phi_Y(x_i)) = g(y_{\pi(i)}) = v_{\pi(i)} = u_i = h(x_i).\]
This establishes that $(\mathcal{X}, f)$ and $(\mathcal{Y}, g)$ are weakly isomorphic in the sense of \Cref{def:strongisom}, and so $\Psi_n$ is injective.

To see that the lower bound equals 1, we compare the quotient metric:
\[
d_{\V_n}([A,u], [B,v]) = \inf_{\sigma \in S_n} \| (A,u) - \sigma (B,v)\|_{\alpha, n}
\] 
and the $\fGW_{\alpha, 1,2}$ distance in the discrete case:
\[
\fGW_{\alpha, 1,2}(\lb (\mathcal{X}, f) \rb, \lb (\mathcal{Y}, g) \rb) = \min_{P \in \Pi_n} \frac{1}{n^2} \sum_{i,j,k,l} \left[\alpha |A_{ik} - B_{jl}|^2  + (1-\alpha) \| u_i - v_j \|_{2}^2 \right]^{1/2} P_{ij} P_{kl}.
\] 
Let $(A,u)$ and $(B,v)$ be representatives, and fix any $\sigma \in S_n$. Let $P = P_\sigma$ be a permutation coupling. Then
\[
\fGW_{\alpha, 1,2}(\lb (\mathcal{X}, f) \rb, \lb (\mathcal{Y}, g) \rb) \leq \frac{1}{n^2} \sum_{i,k} \left[ \alpha |A_{ik} - B_{\sigma(i) \sigma(k)}|^2 + (1-\alpha) \| u_i - v_{\sigma(i)} \|_{2} ^2\right]^{1/2}.
\] 
which gives $\fGW_{\alpha, 1,2}(\lb (\mathcal{X}, f) \rb, \lb (\mathcal{Y}, g) \rb) \leq \| (A,u) - \sigma (B,v) \|_{\alpha, n}$. Taking the infimum over $\sigma$ yields the bound $\fGW_{\alpha, 1,2} \leq  d_{\V_n}$.

For the upper bound $C_n$, by the Birkhoff-von Neumann Theorem any $P \in \Pi_n$ admits a convex decomposition of permutation matrices $P = \sum_\sigma \lambda_\sigma \Pi_\sigma$ with $\lambda_\sigma \geq 0$ and $\sum_\sigma \lambda_\sigma = 1$.  Let $(A,u)$ and $(B,v)$ be representatives. Define the fGW objective at coupling $P$ as
\[
Q(P) = \frac{1}{n^2} \sum_{i,j,k,l} \left[ \alpha |A_{ik} - B_{jl} |^2 + (1-\alpha) \| u_i - v_j \|_{2}^2 \right]^{1/2} P_{ij} P_{kl}.
\]
Substituting the Birkhoff-von Neumann decomposition, we have $Q(P) = \sum_{\sigma, \tau} \lambda_\sigma \lambda_\tau Q(\Pi_\sigma, \Pi_\tau)$, where
\[
Q(\Pi_\sigma, \Pi_\tau) = \frac{1}{n^2} \sum_{i,k} \left[\alpha |A_{ik} - B_{\sigma(i) \tau(k)}|^2 + (1-\alpha)\|u_i - v_{\sigma(i)} \|_{2}^2 \right]^{1/2} \geq 0.
\]
Since for all cross terms $Q(\Pi_\sigma, \Pi_\tau) \geq 0$ and for all weights $\lambda_\sigma \lambda_\tau \geq 0$, dropping off-diagonal terms $(\sigma \neq \tau)$ only decreases the sum. Hence, $Q(P) \geq \sum_\sigma \lambda_\sigma^2 Q(\Pi_\sigma, \Pi_\sigma)$.
We next need to bound each diagonal term from below. Let $a_{ik}^\sigma := |A_{ik} - B_{\sigma(i) \sigma(k)}|$ and $b_i^\sigma := \|u_i - v_{\sigma(i)} \|_2$. Expanding $Q(\Pi_\sigma, \Pi_\sigma)$ gives
\[
Q(\Pi_\sigma, \Pi_\sigma) = \frac{1}{n^2} \sum_{i,k} \left[ \alpha (a^\sigma_{ik})^2  + (1-\alpha) (b_i^\sigma)^2 \right]^{1/2} = \|(A,u) - \sigma(B,v)\|_{\alpha, n} \geq d_{\V_n}([A,u], [B,v]),
\]
where the last inequality holds since $d_{\V_n}([A,u], [B,v]) = \inf_\sigma \| (A,u) - \sigma (B,v)\|_{\alpha, n}$. Combining these results gives $Q(P) \geq d_{\V_n}([A,u], [B,v]) \cdot \sum_\sigma \lambda^2_\sigma$. By the same Cauchy-Schwarz argument combined with Carath\'eodory's theorem as in the proof of \Cref{biLipschitzmap}, with at most $(n-1)^2 + 1$ permutations in the support of the decomposition, we obtain
\[\sum_\sigma \lambda_\sigma^2 \geq \frac{1}{(n-1)^2 + 1},\]
giving
\[
Q(P) \geq \frac{d_{\V_n}([A,u], [B,v])}{(n-1)^2 + 1} \quad \forall P \in \Pi_n.
\] 
Taking the infimum over $P$ yields
\[
\fGW_{\alpha, 1,2}(\lb (\mathcal{X}, f) \rb, \lb (\mathcal{Y}, g )\rb ) \geq \frac{1}{(n-1)^2 + 1} d_{\V_n}([A,u], [B,v]).\qedhere
\] 
\end{proof}

\subsection{Proof of fGW-$k_n$-NN Universal Consistency}

To prove universal consistency of the $k_n$-NN classifier in $(\Y, \fGW_{\alpha, 1, 2})$, it remains to show that $\Y$ is separable and has $\sigma$-finite dimension.
The bi-Lipschitz equivalence from \Cref{lem:fgw-coordinate-map} proves the separability of $\Y$ analogously to $\X$, since we can identify each element of the union $\Y = \cup_{n \in \N} \Y(n)$ with each $\V^\leq_n$.

\begin{Lem}\label{Xattseparable}
    $\Y$ is separable.
\end{Lem}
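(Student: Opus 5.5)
The plan follows the proof of \Cref{separable}, with \Cref{lem:fgw-coordinate-map} in place of \Cref{biLipschitzmap}. Since $\Y=\bigcup_{n\in\N}\Y(n)$ and a countable union of countable sets is countable, it suffices to find, for each fixed $n$, a countable set $D_n\subset\Y(n)$ that is $\fGW_{\alpha,1,2}$-dense in $\Y(n)$. Then $D:=\bigcup_n D_n$ is countable and dense in $\Y$.

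Fix $n$. The space $V_n^\leq=M_n^\leq\times(\R^d)^n$ is a subset of the finite-dimensional space $\R^{n(n-1)/2}\times\R^{nd}$, so it is separable. I would take an explicit countable dense subset $E_n\subset V_n^\leq$: the pairs $(A,u)$ with all entries rational, $A$ having positive off-diagonal entries and satisfying the triangle inequalities, and $u\in(\mathbb{Q}^d)^n$.

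To see that $E_n$ is dense, take any $(A,u)\in V_n^\leq$ and perturb $A$ to $A+\varepsilon(J-I)$, where $J$ is the all-ones matrix. This makes every triangle inequality strict, since $A_{ij}+A_{jk}-A_{ik}$ increases by $\varepsilon$. Strict inequalities and positivity are open conditions, so rational matrices close to $A+\varepsilon(J-I)$ still lie in $M_n^\leq$. Approximating $u$ by rational vectors is immediate.

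Each element of $E_n$ is the distance matrix of a genuine $n$-point metric space with uniform measure and a feature map. So every $[(A,u)]$ with $(A,u)\in E_n$ lies in the image $\Psi_n(\Y(n))$, and I set $D_n:=\Psi_n^{-1}(q(E_n))$, where $q$ is the quotient map $V_n^\leq\to\V_n^\leq$. The map $q$ is $1$-Lipschitz, since $d_{\V_n}$ is an infimum over $\sigma$ that includes the identity. Hence $q(E_n)$ is dense in $\V_n^\leq=\Psi_n(\Y(n))$; the map is onto, because any element of $V_n^\leq$ is realized by an $n$-point space.

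To transfer density back to $\Y(n)$, I use the lower bound $\fGW_{\alpha,1,2}\le d_{\V_n}$ from \Cref{lem:fgw-coordinate-map}, which holds with $c_n=1$. If $d_{\V_n}(\Psi_n(z),\Psi_n(z_k))\to0$, then $\fGW_{\alpha,1,2}(z,z_k)\to0$. This gives density of $D_n$ in $\Y(n)$.

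The one point I expect to need care with is that the norm $\|\cdot\|_{\alpha,n}$ induces the Euclidean topology on $V_n$. I would check that it is continuous in the entries: each summand $[\alpha|A_{ij}|^2+(1-\alpha)\|u_i\|^2]^{1/2}$ is continuous and finite, so small Euclidean perturbations give small $\|\cdot\|_{\alpha,n}$. Only this direction is needed, since density is measured in $d_{\V_n}$. Apart from this check, and the perturbation argument keeping rational approximants inside $M_n^\leq$, the argument is routine.
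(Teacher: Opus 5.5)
Your proof is correct and follows the paper's route: embed $\Y(n)$ into $V_n^\leq$ via $\Psi_n$, use separability of a subset of Euclidean space pushed through the $1$-Lipschitz quotient, and take the countable union over $n$. You also make explicit two things the paper leaves implicit: density transfers back to $(\Y(n),\fGW_{\alpha,1,2})$ because $\Psi_n$ is onto $\V_n^\leq$ and $\fGW_{\alpha,1,2}\le d_{\V_n}$, and $D_n$ is countable because $\Psi_n$ is injective. Your explicit rational dense set, and the perturbation argument it needs, could be replaced by the general fact that every subset of $\R^N$ is separable.
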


\begin{proof}
    Because each element of $\Y(n)$ corresponds after labeling to an attributed object $(A,u) \in V_n^\leq$, separability is preserved under a quotient by an isometric group action, so $\Y(n)$ is separable for each $n$. Take a countable dense subset $F_n$; then $F = \cup_{n \in \N} F_n$ is countable and dense in $\cup_{n\in \N} \Y(n)$, so $F$ is dense in $\Y$.
\end{proof}

We now prove that $k_n$-NN is universally consistent on $(\Y, \fGW_{\alpha, 1, 2})$.

\begin{proof}[Proof of \Cref{consistencyFGW}]
    Let $\Psi_n : \Y(n) \to \V^\leq_n$ be the bi-Lipschitz map defined by \Cref{lem:fgw-coordinate-map}. Note that the product norm defined in \Cref{eq:alpha-norm} is not the standard Euclidean norm. However, since all norms are equivalent on finite-dimensional spaces, \Cref{lem:packingmetricdim} still applies (by which we mean that we can prove \Cref{lem:packingmetricdim} with the norm in \Cref{eq:alpha-norm} following the same method; while this would yield a different bounding constant on the number of cubes, this constant is still finite). Hence, due to the existence of the bi-Lipschitz map $\Psi_n$,  $\dim_{AN}^{s_n}(\Y(n), \Y) < \infty$ and in particular $\Y$ has $\sigma$-finite metric dimension by \Cref{lem:packingmetricdim}. The space $\Y$ is also separable (by \Cref{Xattseparable}). Hence, by Theorem 1 in \citet{NIPS2014_2b764b80},  under the assumptions that $k \to \infty$ and $k/n \to 0$, $k_n$-NN is universally weakly consistent on $(\Y, \fGW_{\alpha, 1, 2})$. If in addition $k/\log(n) \to \infty$, then $k_n$-NN is universally strongly consistent on $(\Y, \fGW_{\alpha, 1, 2})$.
\end{proof}

\section{Numerical Experiments}\label{sec:numericalresults}
We present the results of our numerical experiments with GW-$k_n$-NN and fGW-$k_n$-NN on several categories of graph datasets: three molecular, two social, and two synthetic (random graph) datasets. The summary statistics for all datasets are shown in Table \ref{table:sumstats}. All experiments were implemented on an Apple MacBook Pro with an Apple M2 chip and 16~GB RAM. For reproducibility, all experiments were run in Jupyter notebooks, and the code that produced the results in this section, along with the results themselves, can be found on our project GitHub.\footnote{\texttt{https://github.com/khohmeier/gw-knn}} Parameter tuning for $\alpha$ in fGW-$k_n$-NN was conducted via a symmetric linear search in the interval $[0,1]$. In all experiments, training and testing sets were obtained via 10-fold cross validation.

To compare against our GW-$k_n$-NN and fGW-$k_n$-NN methods, we also implement a custom-built graph convolutional network and several graph kernel methods whose code is derived, with modifications, from the implementations in \citet{vay2019fgw}; these modifications include computing additional metrics such as F1 scores and by-class classification accuracy. For the GCN, Bayesian hyperparameter tuning was used to determine number of epochs, batch size, and learning rate for the neural network. For number of epochs, a range from 20 to 100, with a step size of 10, was tested; for the learning rate, we used a range from $1e^{-4}$ to $1e^{-1}$, where the learning rate value was sampled via a log scale; and finally, batch sizes of 16, 32, and 64 were tested. For the attributed graphs, we used the graph kernel methods of HOPPER \cite{hopperk}, the propagation kernel \cite{propak}, and the indefinite kernel matrix $e^{-\gamma FGW}$ \cite{vay2019fgw}, and classification is performed using an SVM. These methods will be denoted as HOPPER(V), PROPA(V), and fGW(V), respectively. For the SVM when applied to HOPPER(V) and PROPA(V), the parameter $C$ was cross-validated via logarithmically spaced values between $10^{-5}$ and $10^5$, for a total of 15 values. The $t_{max}$ parameter for PROPA(V) was chosen from a discrete set of 7 values: $\{1, 3, 5, 8, 10, 15, 20\}$. To tune $\gamma$ and $\alpha$ in fGW(V), $\gamma$ is cross-validated from 15 exponentially spaced values from $2^{-10}$ to $2^{10}$, while $\alpha$ is cross-validated from a search space that includes the endpoint extremes of {0, 1}, log-spaced values near 0, linear mid-range values, and log-spaced values near 1. For fGW(V), the SVM $C$ parameter ranges from $10^{-4}$ to $10^4$ and again uses logarithmically spaced values. To compute the feature distance matrix in both fGW-$k_n$-NN and fGW(V), we use the $\ell_2$ distance between the real-valued vector attributes of the nodes. For the non-attributed graphs, we used the shortest path kernel \cite{spk} and the graphlet count kernel \cite{gk}, again using an SVM to classify. The parameter search space in this case involves the SVM parameter $C$, again cross-validated via logarithmically spaced values between $10^{-5}$ and $10^5$; for GK(V), we use graphlet sizes of $3,4,$ and $5$ and test $\delta$ and $\epsilon$ parameters of $0.1$ and $0.05$. All graph kernel methods utilized nested 10-fold cross validation.

\begin{table}[H]
    \centering
    \scalebox{0.8}{\begin{tabular}{|l|l|l|l|l|}
    \hline
    Dataset & Size & Avg. Nodes & Classes & Node Features? \\
    \hline
    BZR     & 405  & 36         & 2   & Yes            \\
    \hline
    COX-2   & 467  & 41         & 2   & Yes            \\
    \hline
    PROTEINS & 1113 & 39 & 2 & Yes
    \\
    \hline
    ERSBM1  & 200  & 100        & 2       & No             \\
    \hline
    ERSBM2  & 200  & 100        & 2       & No             \\
    \hline
    IMDb-B  & 1000 & 20         & 2       & No             \\
    \hline
    IMDb-M  & 1500 & 13         & 3       & No           \\
    \hline
    \end{tabular}}
    \caption{Properties of molecular, social, and synthetic graphs used in all numerical experiments.}
    \label{table:sumstats}
\end{table}

\subsection{Datasets} 

For our numerical experiments, we utilize 7 datasets. Of these, 5 are popular benchmark datasets: BZR, COX-2, PROTEINS, IMDb-B, and IMDb-M. BZR, COX-2, and PROTEINS consist of vector-attributed graphs, while IMDb-B and IMDb-M are non-attributed graphs. The remaining two datasets are synthetic random graph datasets, which we will denote by ERSBM1 and ERSBM2.

The two small molecule datasets, BZR and COX-2, are both derived from \citet{bzr}. BZR is a dataset of benzodiazepine receptor ligands, which are stored and treated as rotation-invariant graphs. COX-2 consists of cyclooxygenase-2 inhibitors. The binary classes for both datasets are active, or binding (class 1) versus inactive, or non-binding (class -1). For BZR, the number of graphs labeled as class 1 is 86, while the number of elements in class -1 is 319. For COX-2, the number of graphs labeled as class 1 is 102, and the number of graphs in class -1 is 365. Hence, both are unbalanced classification problems. For both datasets, the three-dimensional vector attributes assigned to each node encode the three-dimensional structure of each molecule, and the entries of each vector denote the coordinates of the respective atom in $\R^3$; see section 3 in \citet{mahe:hal-00433580}.  The classification task for both datasets will involve binary classification, active versus inactive.
The dataset PROTEINS contains macromolecules and was curated by \citet{DOBSON2003771}, where the classification task is again binary and involves predicting whether a given protein is an enzyme (class 1) or a non-enzyme (class 2). This likewise is an unbalanced classification problem, with 663 enzymes and 450 non-enzymes. Node features are 29-dimensional vectors and encode feature information such as amino acid frequencies \cite{DOBSON2003771}. The conversion of each protein into a graph structure was accomplished via the strategy proposed by \citet{bioinformatics}, where nodes denote secondary structure elements within each protein, such as helices, sheets, and turns, and edges between nodes exist in one of the following situations: neighbors along the amino acid sequence, neighbors in space within the protein
structure, or three nearest spatial neighbors.

For the non-attributed graphs, the social datasets consist of two IMDb movie collaboration networks curated by \cite{imdbdata}: IMDb-B (binary) with 500 graphs per class, and IMDb-M (multiclass) with 500 graphs per class. In each graph, nodes represent an actor or actress, edges between them indicate that the actors appeared in another movie together, and the entire ego-network represents the cast of a single film. Each network then receives a class label based on film genre: in the binary dataset, the two movie genres are Action (Class 0) and Romance (Class 1), while the multiclass data contains the genres Comedy (Class 1), Romance (Class 2), and Sci-Fi (Class 3). The classification task is to determine which genre each ego-network falls into. Finally, the two synthetic datasets, ERSBM1 and ERSBM2, consist of Erd\H{o}s-Renyi and stochastic block model graphs. Both datasets contain 100 graphs from each random graph model; all graphs have the same number of nodes; and all the stochastic block model graphs have two equal-sized communities. They differ in edge addition parameters: for ERSBM1, the Erd\H{o}s-Renyi graphs have an edge addition parameter of 0.1, and the stochastic block model graphs have a within-community edge addition probability of 0.15 and between-community probability of 0.05, while for ERSBM2, these parameters are 0.25 for Erd\H{o}s-Renyi and 0.45 and 0.05 for stochastic block model. The classification task is to predict whether a graph falls into the Erd\H{o}s-Renyi or stochastic block random graph model.

\subsection{Results}

\Cref{tab:allattributedresults} illustrates the results of numerical experiments with the attributed graph datasets, reporting average accuracy across all 10 folds of cross validation. For all three datasets, fGW-$k_n$-NN is the clear winner in terms of accuracy, with GW-$k_n$-NN also performing well. However, since these classification problems are unbalanced, comparing models via other metrics such as F1 scores and by-class classification accuracies offers a more complete picture of the performance of these methods. These additional metrics, as well as runtime, are shown in \Cref{tab:attribF1} in Appendix A. It should be noted that although GW-$k_n$-NN performed worse on COX-2 than GCN and fGW(V), GCN did not correctly classify any class 1 (active) molecules, while fGW(V) had a long runtime. The methods HOPPER(V) and PROPA(V) also failed to predict any active molecules in the COX-2 dataset correctly. Hence, GW-$k_n$-NN yields comparable accuracy to other methods, while offering efficiency improvements and better differentiation between classes. Likewise, fGW-$k_n$-NN also offers stronger by-class differentiation versus other methods. The overall accuracy improvements offered by fGW-$k_n$-NN versus GW-$k_n$-NN suggest that including node feature information for attributed graphs provides additional important information when differentiating graphs of each class.

\begin{table}[H]
    \centering
     \scalebox{0.8}{\begin{tabular}{|c|c|c|c|}
    \hline
        Model & BZR & COX-2 & PROTEINS \\
        \hline

         \begin{tabular}{c}
         fGW-$k_n$-NN     
        \end{tabular}
         & \begin{tabular}{c}
           $\boldsymbol{86.95\% \pm 6.14\%}$ \\
           $k=1, \ \alpha=0.9$
        \end{tabular} & \begin{tabular}{c}
          $\boldsymbol{80.31\% \pm 2.56\%}$  \\
          $k=7, \ \alpha=0.0$
        \end{tabular} & \begin{tabular}{c}
           $\boldsymbol{74.75\% \pm 3.70\%}$ \\
           $k=10, \ \alpha=0.0$
        \end{tabular} \\
         \hline
       
         \begin{tabular}{c}
           GW-$k_n$-NN 
         \end{tabular}
        & \begin{tabular}{c}
           $\mathit{85.21\% \pm 6.64\%}$ \\
           $k=1$  
        \end{tabular} & \begin{tabular}{c}
          $77.95\% \pm 3.29\%$    \\
          $k=4$ 
        \end{tabular} & \begin{tabular}{c}
           $70.44\% \pm 3.89\%$ \\
           $k=7$  
        \end{tabular} \\
        \hline

        \begin{tabular}{c} 
        GCN
        \end{tabular} & \begin{tabular}{c}
           $82.49\% \pm 4.43\%$   \\
           $(e=70, \ b=16, $\\
           $r=0.0079)$
        \end{tabular} & \begin{tabular}{c}
          $78.16\% \pm .81\%$   \\
          $(e=50, \ b=16,$ \\
          $r=0.071)$
        \end{tabular}  & \begin{tabular}{c}
          $59.57\% \pm .17\%$   \\
          $(e=50, \ b=16,$ \\
          $r=0.071)$
        \end{tabular}  \\
        \hline
        
        fGW(V) & \begin{tabular}{c}
           $84.88\% \pm 5.09\%$   
        \end{tabular} & \begin{tabular}{c}
          $\mathit{78.51\% \pm 4.41\%}$    
        \end{tabular} &  \begin{tabular}{c}
          $73.21\% \pm 2.88\%$    
        \end{tabular} \\
        \hline

        HOPPER(V) & \begin{tabular}{c}
           $82.44\% \pm 6.53\%$   
        \end{tabular} & \begin{tabular}{c}
           $77.23\% \pm 5.39\%$   
        \end{tabular} &  \begin{tabular}{c}
           $\mathit{74.73 \% \pm 3.85\%}$   
        \end{tabular} \\
        \hline

        PROPA(V) & \begin{tabular}{c}
           $80.24\% \pm 6.85\%$   
        \end{tabular} & \begin{tabular}{c}
           $77.45\% \pm 5.48\%$   
        \end{tabular} &  \begin{tabular}{c}
           $70.45\% \pm 2.86\%$   
        \end{tabular} \\
        \hline   
        
        \end{tabular}}
    \caption{Vector attributed graph results, binary classification, best results for all models. Reported values are averages across all folds of cross validation. Ten-fold CV was implemented for all models except for fGW(V), which uses nested 10-fold CV. Bolded entries are the best-performing models, while italicized entries denote the second-best models.}
    \label{tab:allattributedresults}
\end{table}

In \Cref{tab:allnonattrib}, we show the experimental results for our non-attributed graph datasets, with GW-$k_n$-NN performing the best on two out of four datasets. For the two random graph datasets, GW-$k_n$-NN clearly performed well on ERSBM2, but had much weaker performance on ERSBM1. This difference is likely due to the structural differences between the graphs in each dataset: since GW-$k_n$-NN is a structure-based method, the similarity of the two classes in ERSBM1 led to difficulty differentiating between the classes when using GW-$k_n$-NN, and the structural similarity between the graph classes also likely explains why the graph kernel methods outperformed GW-$k_n$-NN in this case. 
For ERSBM2, while SPK(V) was highly efficient and close in accuracy to GW-$k_n$-NN, GW-$k_n$-NN classified every graph correctly. However, SPK(V)'s performance did not persist when applied to the IMDb-B dataset, where GW-$k_n$-NN strongly outperformed. Interestingly, using adjacency instead of pairwise distances in this dataset also improved the performance of GW-$k_n$-NN, but this came at the cost of significantly increased runtimes. For IMDb-M, GK(V) offered the best results, but this came at a total runtime of nearly 13.5 hours, compared to around 11 minutes for GW-$k_n$-NN. The additional metrics of F1 scores and runtimes are provided via \Cref{tab:nonattribF1} in Appendix A.

\begin{table}[H]
    \centering
    \scalebox{0.8}{\begin{tabular}{|c|c|c|c|c|}
    \hline
        Model & ERSBM1 & ERSBM2 & IMDb-B & IMDb-M \\
        \hline
        \begin{tabular}{c}
           GCN   
        \end{tabular}
        & \begin{tabular}{c}
           $54.50\% \pm 7.57\%$   \\
           $(e=100, \ r=0.028,$
           \\ $b=64)$ 
        \end{tabular} & \begin{tabular}{c}
           $60.00\% \pm 18.30\%$   \\
           $e=70, \ r=0.00026,$
           \\ $b=64)$ 
        \end{tabular} & \begin{tabular}{c}
           $57.80\% \pm 4.02\%$   \\
           $(e=80, \ r=0.0011,$\\
           $ b=32)$ 
        \end{tabular} & \begin{tabular}{c}
           $37.37\% \pm 3.05\%$   \\
            $(e=70, \ r=0.0014,$\\
            $b=16)$  
        \end{tabular} \\
        \hline
        
        \begin{tabular}{c}
          GW-$k_n$-NN    
        \end{tabular}
         & \begin{tabular}{c}
           $62.00\% \pm 7.81\%$   \\
           $(k=3)$
        \end{tabular} &\begin{tabular}{c}
           $\boldsymbol{100.00\% \pm 0.00\%}$  \\
           (all $k$)
        \end{tabular} & \begin{tabular}{c}
           $\mathit{65.10\% \pm 5.32\%}$ \\
           $(k=1)$
        \end{tabular} & \begin{tabular}{c}
           $38.47\% \pm 3.65\%$  \\
           $(k=8)$
        \end{tabular} \\
         \hline
         
         \begin{tabular}{c}
           GW-$k_n$-NN(adj)  
         \end{tabular}
           & \begin{tabular}{c}
           $55.00\% \pm 14.14\%$ \\
           $(k=3)$ 
        \end{tabular} & \begin{tabular}{c}
           $99.00\% \pm 2.00\%$   \\
            $(k=7)$ 
        \end{tabular} & \begin{tabular}{c}
           $\boldsymbol{68.30\% \pm 4.00\%}$  \\
           $(k=6)$ 
        \end{tabular} & \begin{tabular}{c}
           $\mathit{38.73\% \pm 4.56\%}$  \\
           $(k=8)$ 
        \end{tabular} \\
        \hline
        
        GK(V) & \begin{tabular}{c}
           $\mathit{76.00\% \pm 9.43\%}$   
        \end{tabular} & \begin{tabular}{c}
           $99.00\% \pm 2.00\%$   
        \end{tabular} & \begin{tabular}{c}
           $64.10\% \pm 3.05\%$   
        \end{tabular}  & \begin{tabular}{c}
           $\boldsymbol{40.33\% \pm 3.99\%}$   
        \end{tabular} \\
        \hline

        SPK(V) & \begin{tabular}{c}
           $\boldsymbol{96.50\% \pm 7.09\%}$   
        \end{tabular} & \begin{tabular}{c}
           $\mathit{99.50\% \pm 1.50\%}$   
        \end{tabular} & \begin{tabular}{c}
           $54.60\% \pm 3.56\%$   
        \end{tabular} & \begin{tabular}{c}
           $38.20\% \pm 3.18\%$   
        \end{tabular}  \\
        \hline
    \end{tabular}}
    \caption{Non-attributed graph results, best results for all models. Reported values are averages across all folds of cross validation. Ten-fold CV was implemented for all models. Bolded entries are the best-performing models, while italicized entries denote the second-best models.}
    \label{tab:allnonattrib}
\end{table}

\section{Conclusions}

In this paper, we introduced the idea of applying the $k_n$-NN classifier to graph classification problems using the Gromov--Wasserstein and fused Gromov--Wasserstein distances, establishing that GW-$k_n$-NN is universally consistent on the space of graphs and that fGW-$k_n$-NN is universally consistent on the space of node-attributed graphs. In practice, both GW-$k_n$-NN and fGW-$k_n$-NN perform well across many types of graph datasets and can even outperform other well-established graph classification methods. However, efficiency remains a key issue, especially for fGW-$k_n$-NN due to the need to tune the $\alpha$ parameter. The main area of future work focuses on the question: How far can we generalize $k_n$-NN consistency results? Some potential areas for future work are the following.

\begin{itemize}
    \item We established universal consistency for  $(\X, \GW_2)$ and $(\Y, \fGW_{\alpha, 1, 2})$. For what other values of $p$ (and $p$ and $q$ in the fGW case) will this result hold? For what other spaces of equivalence classes of mm-spaces and structured objects will universal consistency hold?
    \item  Analysis of $k_n$-NN universal consistency in the general Gromov--Wasserstein and the general fused Gromov--Wasserstein settings still remain as open problems. We conjecture that, as in the Wasserstein case, universal consistency of $k_n$-NN will not hold in full generality. Hence, some restrictions will be required on the spaces of interest.
    \item Rather than trying to prove universal consistency results separately for each GW framework, taking advantage of the Z-Gromov--Wasserstein framework would offer a strategy to prove universal consistency in the most generality possible. Hence, one might consider: What is the most general choice of $Z$ that guarantees universal consistency?
\end{itemize}

\bibliographystyle{plainnat}
\bibliography{biblio.bib}

\section*{Appendix}

Due to the unbalanced nature of some of the classification problems, we also present F1 scores for each model as an indication of how well each method predicted each class. These results are shown in \Cref{tab:attribF1}, along with runtimes for each model.

\begin{table}[H]
    \centering
     \scalebox{0.8}{\begin{tabular}{|c|c|c|c|}
    \hline
        Model & BZR & COX-2 & PROTEINS \\
        \hline

         \begin{tabular}{c}
         fGW-$k_n$-NN     
        \end{tabular}
         & \begin{tabular}{c}
           $86.61\% \pm 6.56\%$ \\
           23.02 min
        \end{tabular} & \begin{tabular}{c}
          $75.56\% \pm 2.82\%$ \\ 
          26.14 min
        \end{tabular} & \begin{tabular}{c}
           $74.16\% \pm 3.83\%$ \\
          3.00 hrs
        \end{tabular} \\
         \hline
       
         \begin{tabular}{c}
           GW-$k_n$-NN 
         \end{tabular}
        & \begin{tabular}{c}
           $84.36\% \pm 7.18\%$\\
           3.93 min
        \end{tabular} & \begin{tabular}{c}
          $74.67\% \pm 3.62\%$ \\
          4.09 min
        \end{tabular} & \begin{tabular}{c}
           $69.68\% \pm 4.01\% $ \\
           30.20 min
        \end{tabular} \\
        \hline

        \begin{tabular}{c} 
        GCN
        \end{tabular} & \begin{tabular}{c}
           $79.20\% \pm5.94\%$ \\
           24.18 min
        \end{tabular} & \begin{tabular}{c}
          $68.58\% \pm 1.10\%$ \\
          28.37 min 
        \end{tabular}  & \begin{tabular}{c}
          $44.47 \% \pm .20\%$   \\
          55.09 min
        \end{tabular}  \\
        \hline
        
        fGW(V) & \begin{tabular}{c}
           $83.82\% \pm 6.13\%$ \\
           35.05 min
        \end{tabular} & \begin{tabular}{c}
          $74.01\% \pm 7.64\%$ \\ 
          45.77 min
        \end{tabular} &  \begin{tabular}{c}
          $72.53\% \pm 3.34\%$ \\ 
          6.84 hrs
        \end{tabular} \\
        \hline

        HOPPER(V) & \begin{tabular}{c}
           $80.83\% \pm 7.59\%$ \\
           10.28 min
        \end{tabular} & \begin{tabular}{c}
           $67.61\% \pm 7.80\%$  \\
           25.97 min
        \end{tabular} & \begin{tabular}{c}
           $74.09\%  \pm 4.41\%$  \\
           30.41 min
        \end{tabular}  \\
        \hline

        PROPA(V) & \begin{tabular}{c}
           $71.92\% \pm 10.55\%$ \\
           0.93 min
        \end{tabular} & \begin{tabular}{c}
           $67.71\% \pm 7.86\%$  \\
           1.40 min
        \end{tabular} &  \begin{tabular}{c}
           $69.21\% \pm 3.59\%$  \\
           12.26 min
        \end{tabular} \\
        \hline    \end{tabular}}
    \caption{Vector attributed graph results, F1 scores and runtimes.}
    \label{tab:attribF1}
\end{table}

In \Cref{tab:nonattribF1}, we present the F1 scores and runtimes for our non-attributed graph results. While all of these datasets were balanced classification problems, the F1 scores still indicate the ability of the different methods to predict each class.

\begin{table}[H]
    \centering
    \scalebox{0.8}{\begin{tabular}{|c|c|c|c|c|}
    \hline
        Model & ERSBM1 & ERSBM2 & IMDb-B & IMDb-M \\
        \hline
        \begin{tabular}{c}
           GCN   
        \end{tabular}
        & \begin{tabular}{c}
           $41.18\% \pm 11.96\%$  \\
           20.79 min
        \end{tabular} & \begin{tabular}{c}
           $48.20\% \pm 25.55\%$ \\
           19.85 min
        \end{tabular} & \begin{tabular}{c}
           $57.53\% \pm 4.03\%$   \\
           52.53 min 
        \end{tabular} & \begin{tabular}{c}
           $31.19\% \pm 4.04\%$   \\
            58.21 min  
        \end{tabular} \\
        \hline
        
        \begin{tabular}{c}
          GW-$k_n$-NN    
        \end{tabular}
         & \begin{tabular}{c}
           $57.42\% \pm 9.38\%$ \\
           2.32 min
        \end{tabular} &\begin{tabular}{c}
           $100.00\% \pm 0.00\%$  \\
           2.83 min
        \end{tabular} & \begin{tabular}{c}
           $64.87\% \pm 5.34\%$ \\
           7.69 min
        \end{tabular} & \begin{tabular}{c}
           $35.22\% \pm 3.47\%$  \\
           10.82 min
        \end{tabular} \\
         \hline
         
         \begin{tabular}{c}
           GW-$k_n$-NN(adj)  
         \end{tabular}
           & \begin{tabular}{c}
            $53.75\% \pm 14.17\%$ \\
            27.72 min
        \end{tabular} & \begin{tabular}{c}
           $99.00\% \pm 2.01\%$   \\
            31.54 min 
        \end{tabular} & \begin{tabular}{c}
           $68.15\% \pm 4.05\%$ \\
           7.39 hrs 
        \end{tabular} & \begin{tabular}{c}
           $35.94\% \pm 4.34\%$  \\
           6.53 hrs 
        \end{tabular} \\
        \hline
        
        GK(V) & \begin{tabular}{c}
           $76.04\% \pm 9.81\%$   \\
           4.23 min
        \end{tabular} & \begin{tabular}{c}
           $98.99\% \pm 2.13\%$   \\
           7.03 min
        \end{tabular} & \begin{tabular}{c}
           $63.89\% \pm 3.43\%$  \\
           3.23 hrs
        \end{tabular}  & \begin{tabular}{c}
           $37.39\% \pm 4.03\%$  \\
           13.45 hrs
           
        \end{tabular} \\
        \hline

        SPK(V) & \begin{tabular}{c}
            $96.52\% \pm 7.44\%$   \\
            2.58 sec 
        \end{tabular} & \begin{tabular}{c}
           $99.50\% \pm 1.58\%$ \\
           3.63 sec
        \end{tabular} & \begin{tabular}{c}
           $53.42\% \pm 4.26\%$   \\
           1.42 min 
        \end{tabular} & \begin{tabular}{c}
           $30.90\% \pm 3.10\%$  \\
           4.48 min
        \end{tabular}  \\
        \hline
    \end{tabular}}
    \caption{Non-attributed graph results, F1 scores and runtimes.}
    \label{tab:nonattribF1}
\end{table}

\end{document}